\documentclass[11pt, twoside, letter]{article}
\usepackage[margin = 1in]{geometry}
\usepackage[utf8]{inputenc}
\usepackage{amsmath, amssymb, amsfonts, cite, amsopn, amssymb}
\usepackage{ifthen}
\usepackage[vlined,ruled,linesnumbered]{algorithm2e}
\usepackage{comment}

\usepackage{amsthm}
\usepackage{hyperref}

\usepackage{verbatim}
\usepackage{bbm}
\usepackage{color}
\usepackage{graphicx, epstopdf}
\usepackage{mathtools}
\usepackage{float}


\newtheorem{theorem}{Theorem}
\newtheorem{corollary}{Corollary}
\newtheorem{lemma}{Lemma}

\newtheorem{proposition}{Proposition}
\newtheorem{claim}{Claim}

\newtheorem{assumption}{Assumption}
\newtheorem{remark}{Remark}
\newtheorem{definition}{Definition}
\newtheorem{example}{Example}


\newcommand{\BALD}{\begin{aligned}}
\newcommand{\EALD}{\end{aligned}}
\newcommand{\BALDS}{\begin{aligned*}}
\newcommand{\EALDS}{\end{aligned*}}
\newcommand{\BCAS}{\begin{cases}}
\newcommand{\ECAS}{\end{cases}}
\newcommand{\BEAS}{\begin{eqnarray*}}
\newcommand{\EEAS}{\end{eqnarray*}}
\newcommand{\BEQ}{\begin{equation}}
\newcommand{\EEQ}{\end{equation}}
\newcommand{\BIT}{\begin{itemize}}
\newcommand{\EIT}{\end{itemize}}
\newcommand{\BMAT}{\begin{bmatrix}}
\newcommand{\EMAT}{\end{bmatrix}}
\newcommand{\BNUM}{\begin{enumerate}}
\newcommand{\ENUM}{\end{enumerate}}
\newcommand{\eg}{{\it e.g.}}

\newcommand{\BA}{\begin{array}}
\newcommand{\EA}{\end{array}}

\newcommand{\reals}{\mathbf{R}}


\newcommand{\diag}{\mathop{\mathbf{diag}}}

\newcommand{\nullspace}{\mathcal{N}}

\newcommand{\range}{\mathcal{R}}

\DeclareMathOperator{\linspan}{span}
\newcommand{\norm}[1]{\left\| #1 \right\|}

\newcommand{\rmD}{\mathrm{D}}

\newcommand{\Id}{\textup{Id}}

\newcommand{\cA}{\mathcal{A}}

\newcommand{\cC}{\mathcal{C}}
\newcommand{\cD}{\mathcal{D}}

\newcommand{\cK}{\mathcal{K}}
\newcommand{\cM}{\mathcal{M}}
\newcommand{\cN}{\mathcal{N}}

\newcommand{\cR}{\mathcal{R}}

\newcommand{\cT}{\mathcal{T}}

\newcommand{\cX}{\mathcal{X}}

\newcommand{\R}{\mathbb{R}}

\def\bar{\overline}

\begin{document}

\title{First-order Methods Almost Always Avoid Saddle Points \thanks{This paper significantly extends upon the special case of gradient descent dynamics developed in the conference proceedings of the authors \cite{pmlr-v49-lee16,panageas2017gradient}.}}

\author{Jason D. Lee\\USC\\jasonlee@marshall.usc.edu
\and Ioannis Panageas\\MIT\\ioannis@csail.mit.edu
\and Georgios Piliouras\\SUTD\\georgios@sutd.edu.sg
\and Max Simchowitz\\UC Berkeley\\msimchow@berkeley.edu
\and Michael I. Jordan\\UC Berkeley\\jordan@cs.berkeley.edu
\and Benjamin Recht\\UC Berkeley\\brecht@berkeley.edu
}

\date{}
\maketitle

\begin{abstract}
We establish that first-order methods avoid saddle points
 for almost all initializations. Our results apply to a wide variety of first-order methods, including gradient descent, block coordinate descent, mirror descent and variants thereof. The connecting thread is that such algorithms can be studied from a dynamical systems perspective in which appropriate instantiations of the Stable Manifold Theorem allow for a global stability analysis. Thus, neither access to second-order derivative information nor randomness beyond initialization is necessary to provably avoid saddle points.

\end{abstract}

\section{Introduction}
Saddle points have long been regarded as a major obstacle for non-convex optimization over continuous spaces. It is well understood that in many applications of interest, the number of saddle points significantly outnumber the number of local minima, which is especially problematic when the solutions associated with worst-case saddle points are considerably worse than those associated with worst-case local minima 
 \cite{dauphin2014identifying,pascanu2014saddle,choromanska2014loss}.
Moreover, it is not hard to construct examples where a worst-case initialization of gradient descent (or other  first-order methods) provably converge to saddle points \cite[Section 1.2.3]{nesterov2004introductory}.

The main message of our paper is that, under very mild regularity conditions, saddle points have little effect on the asymptotic behavior of first-order methods.  Building on tools from the theory of dynamical systems, we generalize  recent analysis of gradient descent  \cite{pmlr-v49-lee16,panageas2017gradient} to establish that a wide variety of first-order methods --- including gradient descent, proximal point algorithm, block coordinate descent, mirror descent --- avoid so-called ``strict'' saddle points for almost all initializations; that is, saddle points where the Hessian of the objective function admits at least one direction of negative curvature (see Definition~\ref{def:strict_saddle}).

Our results provide a unified theoretical framework for analyzing the asymptotic behavior of a wide variety of classic optimization heuristics in non-convex optimization. Furthermore, we believe that furthering our understanding of the behavior and geometry of deterministic optimization techniques with random initialization can serve in the development of stochastic algorithms which improve upon their deterministic counterparts and achieve strong convergence-rate results; indeed, such insights have already led to significant improves in modifying gradient descent to navigate saddle-point geometry \cite{du2017gradient,jin2017escape}.

\subsection{Related work}
In recent years, the optimization and machine learning communities have dedicated much effort to understanding the geometry of non-convex landscapes by searching for unified geometric properties which could be leverage by general-purpose optimization techniques. The strict saddle property (Definition~\ref{def:strict_saddle}) is one such property which has been shown to hold in a wide and diverse range of salient objective functions: PCA, a fourth-order tensor factorization \cite{ge2015escaping}, formulations of dictionary learning  \cite{sun2015complete2,sun2015complete1}, phase retrieval \cite{sun2016phase}, low-rank matrix factorizations \cite{ge2016matrix,ge2017no,bhojanapalli2016global}, and simple neural networks \cite{soltanolkotabi2017theoretical,du2017convolutional,brutzkus2017globally}. It is also known that, in the worst case, the strict saddle property is unavoidable as finding descent-directions at critical points with degenerate Hessians is NP-hard in general \cite{murty1987some}.

Earlier work had shown that first-order descent methods can circumvent strict saddle points, provided that they are augmented with unbiased noise whose variance is sufficiently large in each direction. For example, \cite{pemantle1990nonconvergence} establishes convergence of the Robbins-Monro stochastic approximation to local minimizers for strict saddle functions. More recently, \cite{ge2015escaping} give quantitative rates on the convergence of noisy gradient descent  to local minimizers, for strict saddle functions.

To obtain provable guarantees without the addition of stochastic noise, \cite{sun2015complete2,sun2015complete1} and \cite{sun2016phase} adopt trust-region methods which leverage Hessian information in order to circumvent saddle points. This approach represents a refinement of a long tradition of related, ``second-order'' strategies, including: a modified Newton's method with curvilinear line search \cite{more1979use}, the modified Cholesky method \cite{gill1974newton}, trust-region methods \cite{conn2000trust}, and the related cubic regularized Newton's method \cite{nesterov2006cubic}, to name a few. Specialized to deep learning applications, \cite{dauphin2014identifying,pascanu2014saddle} have introduced a saddle-free Newton method.

However, such curvature-based optimization algorithms have a per-iteration computational complexity which scales quadratically or even cubically in the dimension $d$, rendering them unsuitable for optimization of high-dimensional functions. In more recent work, several works have presented faster curvature-based methods including \cite{royer2017complexity, liu2017noisy,reddi2017generic} by combining fast first-order methods with fast eigenvector algorithms, to obtain lower per-iteration complexity.

Fortunately, it appears that neither the addition of isotropic noise, nor the use of second-order methods are  necessary for circumventing saddle points. For example, recent work by  \cite{jin2017escape} showed that by carefully perturbing the iterates of gradient descent in the vicinity of possible saddles results in a first-order method which converges to local minimizers in a number of iterations with only poly-logarithmic dimension dependence. Moreover, many recent works have shown that, even without any random perturbations, a combination of gradient descent and a smart-initialization provably converges to the global minimum for a variety of non-convex problems: such settings include matrix factorization \cite{Keshavan09,zhaononconvex} , phase retrieval \cite{candes2015phase,cai2015optimal}, dictionary learning \cite{arora2015simple}, and latent-variable models \cite{zhang2014spectral,belkin2014basis}. While our results only guarantee convergence to local minimizers, they eschew the need for complex and often computationally prohibitive initialization procedures.

In addition to what has been established theoretically, there is a broadly-accepted folklore in the field that running gradient descent with a random initialization is sufficient to identity a local optima. For example, the authors of \cite{sun2016phase} empirically observe gradient descent with $100$ random initializations on the phase retrieval problem always converges to a local minimizer, one whose quality matches that of the solution found using more costly trust-region techniques. It is the purpose of this work to place these intuitions on firm mathematical footing.

Finally, we emphasize that their are many settings in which all local optima (but not saddles!) have objective values which are nearly as small as those of the global minima; see for example \cite{ge2016matrix,ge2017no,soltanolkotabi2017theoretical,sun2015nonconvex,sun2015complete1}. Some preliminary results have suggested that this may be a a quite general phenomenon. For example, \cite{choromanska2014loss} study the loss surface of a particular Gaussian random field as a proxy for understanding the objective landscape of deep neural nets. The results leverage the Kac-Rice Theorem \cite{adler2009random,auffinger2013random}, and establish that critical points with more positive eigenvalues have lower expected function value, often close to that of the global minimizer.
We remark that functions drawn from this Gaussian random field model share the strict saddle property defined above, and so our results apply in this setting.  On the other hand, our results are considerably more general, as they do not place stringent generative assumptions on the objective function $f$.

\subsection{Organization}
The rest of the paper is organized as follows. Section \ref{sec:prelim} introduces the notation and definitions used throughout the paper. Section \ref{sec:intuition} provides an intuitive explanation for why it is unlikely that gradient descent converges to a saddle point, by studying a non-convex quadratic and emphasizing the analogy with power iteration. Section \ref{sec:smt-unstable-fp} develops the main technical theorem, which uses the stable manifold theorem to show that the stable set of unstable fixed points has measure zero. Section \ref{sec:algs} applies the main theorem to show that gradient descent, block coordinate descent, proximal point, manifold gradient descent, and mirror descent all avoid saddle points. Finally, we conclude in Section \ref{sec:conclusion} by suggesting several directions of future work.

\section{Preliminaries}
\label{sec:prelim}
Throughout the paper, we will use $f: \cX \to \reals $ to denote a real-valued function in $C^2$, the space of twice-continuously differentiable functions.

\begin{definition}[Strict Saddle]\label{def:strict_saddle}
When $\cX= \reals^d$,
\begin{enumerate}
\item A point $x^*$ is a critical point of $f$ if $\nabla f(x^*)=0$.

\item A point $x^*$ is a strict saddle point\footnote{For the purposes of this paper, strict saddle points include local maximizers.} of $f$ if $x^*$ is a critical point and  $\lambda_{\min} (\nabla^2 f(x^*)) <0$. Let $\cX^*$ denote the set of strict saddle points.
\end{enumerate}

When $\cX$ is a manifold, the same definition applies, but with gradient and Hessian replaced by the  Riemannian gradient $\nabla_{R} f(x)$ and Riemannian Hessian $\nabla^2_{R} f(x)$. See Section \ref{sec:manifold-gd} for details, and Chapter 5.5 of \cite{absil2010optimization}.

\end{definition}

Our interest is in the attraction region of an optimization algorithm $g$, viewed as a mapping from $\cX \to \cX$. The iterates of the algorithm are generated by the sequence
\begin{align*}
x_{k}=g(x_{k-1})=g^k (x_0),
\end{align*}
where $g^k$ is the $k$-fold composition of $g$.
As an example, gradient descent corresponds to $g(x) = x_k - \alpha \nabla f(x_{k})$.

Since we are interested in the region of attraction of a critical point, we provide the definition of the stable set.
\begin{definition}[Global Stable Set]
The global stable set of the strict saddles is the set of initial conditions where iteration of the mapping $g$ converges to a strict saddle. This is defined as
\begin{align*}
W_g = \{ x_0: \lim_k g^k (x_0) \in \cX^* \}.
\end{align*}
\end{definition}

\section{Intuition}

\label{sec:intuition}
To illustrate why gradient descent and related first-order methods do not converge to saddle points, consider the case of a non-convex quadratic, $f(x)=\frac{1}{2} x^T H x$.  Without loss of generality, assume $H = \diag(\lambda_1,...,\lambda_n)$  with $\lambda_1, ..., \lambda_k >0$ and $\lambda_{k+1}, \dots, \lambda_n <0$.  $x^*=0$ is the unique critical point of this function and the Hessian at $x^*$ is $H$. Gradient descent initialized from $x_0$ has iterates
\[
	x_{t+1} =g(x_t)=\sum_{i=1}^n (1-\alpha \lambda_i)^{t+1} \langle e_i,x_0 \rangle e_i\,.
\]
where $e_i$ denote the standard basis vectors. This iteration resembles power iteration with the matrix $I - \alpha H$.

Let $L = \max |\lambda_i|$, and suppose $\alpha < 1/L$.  Thus we have $(1- \alpha \lambda_i)< 1$ for $i \le k$ and $(1-\alpha \lambda_i) >1$ for $i >k$. If  $x_0 \in E_s:=\linspan(e_1,\ldots, e_k)$, then $x_t$ converges to the saddle point at zero since $(1-\alpha \lambda_i)^{t+1} \to 0$. However, if $x_0$ has a component outside $E_s$ then gradient descent diverges to $\infty$.  For this simple quadratic function, we see that the global stable set (attractive set) of zero is the subspace $E_s$.  Now, if we choose our initial point at random, the probability of that point landing in $E_s$ is zero as long as $k<n$ (i.e., $E_s$ is not full dimensional).

As an example of this phenomenon for a non-quadratic function, consider the following example from \cite[Section 1.2.3]{nesterov2004introductory}. Letting $f(x,y) = \frac12 x^2 +\frac14 y^4-\frac12 y^2$, the corresponding gradient mapping is
\begin{align*}
g(x) &= \begin{bmatrix}
(1-\alpha) x\\
(1+\alpha) y - \alpha y^3
\end{bmatrix}.
\end{align*}
The critical points are
\begin{align*}
z_1 = \begin{bmatrix} 0 \\0 \end{bmatrix}, \quad z_2 = \begin{bmatrix} 0 \\ -1 \end{bmatrix},\quad
z_3 = \begin{bmatrix} 0 \\ 1 \end{bmatrix}.
\end{align*}
 The points $z_2$ and $z_3$ are isolated local minima, and $z_1$ is a saddle point.

Gradient descent initialized from any point of the form $\begin{bmatrix} x \\ 0 \end{bmatrix}$ converges to the saddle point $z_1$. Any other initial point either diverges, or converges to a local minimum, so the stable set of $z_1$ is the $x$-axis, which is a zero-measure set in $\reals^2$. By computing the Hessian,
\begin{align*}
\nabla ^2 f(x) = \begin{bmatrix}
1 & 0 \\
0 & 3y^2 -1
\end{bmatrix},
\end{align*}
we find that $\nabla ^2 f(z_1)$ has one positive eigenvalue with eigenvector that spans the $x$-axis, thus agreeing with our above characterization of the stable set. If the initial point is chosen randomly, there is zero probability of initializing on the $x$-axis and thus zero probability of converging to the saddle point $z_1$.

For gradient descent, the local attractive set of a critical point $x^*$ is well-approximated by the span of the eigenvectors corresponding to positive eigenvalues of the Hessian.  By an application of Taylor's theorem, one can see that if the initial point $x_0$ is uniformly random in a small neighborhood around $x^*$, then the probability of initializing in the span of these eigenvectors is zero whenever there is a negative eigenvalue.  Thus, gradient descent initialized at $x_0$ will leave the neighborhood of $x^*$. Although this argument provides valuable intuition, there are several difficulties with formalizing this argument: 1)  $x_0$ is randomly distributed over the entire domain, not a small neighborhood around $x^*$, and Taylor's theorem does not provide any global guarantees, and 2) it does not rule out converging to a different saddle point.

\section{Stable Manifold Theorem and Unstable Fixed Points}
\label{sec:smt-unstable-fp}
\subsection{Setup}
For the rest of this paper, $g$ is a mapping from $\cX$ to itself, and $\cX$ is a $d$-dimensional manifold without boundary. Recall that a $\mathcal{C}^k$-smooth, $d$-dimensional manifold is a space $\cX$, together with a collection of \emph{charts} $\{(U_{\alpha},\phi_{\alpha})\}$, called an \emph{atlas}, where each $\phi_{\alpha}$ is a homeomorphism from an open subset $U_{\alpha} \subset \cX$ to $\R^d$. The charts are required to be compatible in the sense that, whenever $U_{\alpha} \cap U_{\beta} \ne \emptyset$, then the transition map $\phi_{\alpha} \circ \phi_{\beta}^{-1}$ is a $\mathcal{C}^k$ map from $\phi_{\beta}(U_{\beta} \cap U_{\alpha}) \to \R^d$. We also require that $\bigcup_{\alpha} U_{\alpha} = \cX$, and $\cX$ is second countable, which means that for any set $U$ contained in $\bigcup_{\alpha \in \mathcal{I}}U_{\alpha}$ for some index set $\mathcal{I}$, there exists a countable set $\mathcal{J} \subset \mathcal{I}$ such that $U \subset \bigcup_{\alpha \in \mathcal{J}}U_{\alpha}$. We can now recall the definition of a measure zero subset of a manifold:
\begin{definition}[Section 5.4 of \cite{mikusinski2012introduction}]
Given a $d$-dimensional manifold $\cX$, we say that a set $E \subset \cX$ is measure zero if there is an atlas $\{U_i,\phi_i\}_{i \ge 1}$ such that $\phi_i(E \cap U_i)$ has Lebesgue-measure zero as a subset of $\R^d$. In this case, we use the shorthand $\mu(E) = 0$. The measure zero property is independent of the choice of atlas \cite[Chapter 5]{mikusinski2012introduction}.
\end{definition}

\begin{definition}[Chapter 3 of \cite{absil2010optimization}]
The differential of the mapping $g$, denoted as $\rmD g (x)$, is a linear operator from $\cT(x)  \to \cT(g(x))$, where $\cT(x)$ is the tangent space of $\cX$ at point $x$. Given a curve $\gamma$ in $\cX$ with $\gamma(0) =x$ and $\frac{d\gamma}{dt}(0) =v \in \cT(x)$,  the linear operator is defined as $\rmD g (x)  v = \frac{d( g \circ \gamma) }{dt} (0) \in \cT(g(x))$. The determinant of the linear operator $\det(\rmD g(x)  ) $ is the determinant of the matrix representing $\rmD g(x)$ with respect to an arbitrary basis\footnote{The determinant is invariant under similarity transformations, so is independent of the choice of basis.}.
\end{definition}

\begin{lemma}
	Let $E \subset \cX$ be a measure zero subset. If $\det(Dg(x)) \neq 0$ for all $x \in \cX$ , then $\mu (g^{-1}(E))$ has measure zero.
	\label{lem:measure-0-local-diff}
\end{lemma}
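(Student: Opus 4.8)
The plan is to reduce the global statement to a countable collection of local statements, each of which follows from the classical change-of-variables fact that a $C^1$ diffeomorphism maps Lebesgue-null sets to Lebesgue-null sets. First I would invoke the inverse function theorem: since $\det(\rmD g(x)) \neq 0$ at every $x \in \cX$, the map $g$ is a local diffeomorphism, so for each $x \in \cX$ there is an open neighborhood $V_x$ on which $g$ restricts to a diffeomorphism onto its (open) image $g(V_x)$. Composing with the manifold charts, for a point $x$ lying in a chart $(U_\alpha,\phi_\alpha)$ with $g(x)$ in a chart $(U_\beta,\phi_\beta)$, the map $\phi_\beta \circ g \circ \phi_\alpha^{-1}$ is a $C^1$ diffeomorphism between open subsets of $\R^d$ near $\phi_\alpha(x)$.

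Next I would handle the measure-zero bookkeeping. By definition of measure zero on a manifold, fix an atlas $\{(U_i,\phi_i)\}_{i\geq 1}$ witnessing that $\phi_i(E \cap U_i)$ is Lebesgue-null in $\R^d$ for each $i$. Cover $\cX$ by the local-diffeomorphism neighborhoods $\{V_x\}_{x\in\cX}$, refined so that each $V_x$ is contained in some chart domain and $g(V_x)$ is contained in some $U_i$; by second countability, extract a countable subcover $\{W_j\}_{j\geq 1}$, with associated charts $(O_j,\psi_j)$ containing $W_j$ and indices $i(j)$ with $g(W_j) \subset U_{i(j)}$. Then $g^{-1}(E) = \bigcup_j \big(g^{-1}(E) \cap W_j\big)$, and since a countable union of measure-zero sets is measure zero (and the measure-zero property is atlas-independent), it suffices to show each $\psi_j\big(g^{-1}(E) \cap W_j\big)$ is Lebesgue-null in $\R^d$. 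On $W_j$ we have $g^{-1}(E) \cap W_j = (g|_{W_j})^{-1}(E \cap g(W_j)) \subset (g|_{W_j})^{-1}(E \cap U_{i(j)})$, and in local coordinates this set is the image of $\phi_{i(j)}(E \cap U_{i(j)})$ — a Lebesgue-null set — under the $C^1$ diffeomorphism $\psi_j \circ (g|_{W_j})^{-1} \circ \phi_{i(j)}^{-1}$. Since $C^1$ maps are locally Lipschitz and Lipschitz maps send null sets to null sets, the image is Lebesgue-null, as desired.

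The only genuinely delicate point is the careful matching of the three layers of charts — the atlas witnessing that $E$ is null, the domain charts for $g^{-1}(E)$, and the local-diffeomorphism neighborhoods — together with the second-countability argument that makes the subcover countable; this is routine but must be done cleanly so that every set in sight is an open subset of a single chart domain where the ordinary Euclidean change-of-variables applies. I expect this bookkeeping, rather than any analytic content, to be the main obstacle: once everything is localized to $\R^d$, the result is just the standard fact that diffeomorphisms (indeed locally Lipschitz maps) preserve Lebesgue-null sets.
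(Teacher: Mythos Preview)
Your proposal is correct and follows essentially the same approach as the paper: reduce to countably many local pieces via charts and second countability, then use that the local coordinate expression of $g^{-1}$ is $C^1$ (hence locally Lipschitz) and therefore preserves Lebesgue-null sets. If anything, your version is slightly more careful than the paper's, which writes $h=g^{-1}$ globally and only implicitly relies on the local-diffeomorphism structure; your explicit use of the inverse function theorem and the refined countable cover $\{W_j\}$ makes the local nature of the argument clearer.
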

\begin{proof}
	For clarity, let $h=g^{-1}$.
	Let $(V_i , \psi_i) $ be a countable collection of charts of the co-domain of $g$. By countable additivity of measure, it suffices to show that each $h(E) \cap V_i$ is measure zero. Without loss of generality, we may assume that $h(E)$ is contained in a chart $( V, \psi)$, else we could repeat the same argument for each element of the chart.
	
	We wish to show that $\mu( \psi \circ h  (E)) =0$. Let $(U_j, \phi_j)$ be another countable collection of charts of the domain of $g$. Define $E_j = E \cap U_j $, and note that $E= \cup_{i=1}^{\infty} E_i  = \cup \phi^{-1} \circ \phi\ (E_i)$. Thus
	\begin{align*}
	\mu(\psi \circ h (E)) &= \mu(\psi \circ h (\cup_{i} \phi^{-1} \circ \phi (E_i)))\\
	&\le \sum_{i=1}^\infty  \mu ( \psi \circ h \circ \phi ^{-1}  (\phi(E_i))) .
	\end{align*}
	By assumption, $\phi(E_i)$ is measure zero. The function $ \psi \circ h \circ \phi^{-1} =\psi \circ g^{-1} \circ \phi^{-1}$ is $C^1$ if $\det( Dg) \neq 0$, and thus locally Lipschitz, so preserves measure zero sets. By countable additivity and the displayed equation above, $E$ has measure zero.
	
\end{proof}
\subsection{ Unstable Fixed Points}

\begin{definition}[Unstable fixed point]
Let $$\cA^* _g= \{x: g(x) =x, \max_{i} |\lambda_i (Dg(x))|>1 \}$$ be the set of fixed points where the differential has at least a single eigenvalue with magnitude greater than one. These are the unstable fixed points.
\end{definition}

\begin{theorem}[Theorem III.7, \cite{shub1987global}]
	Let $x^*$ be a fixed point for the $C^r$ local diffeomorphism $g: \cX \to \cX$. Suppose that $E = E_s \oplus E_u$, where $E_s$ is the span of the eigenvectors corresponding to eigenvalues of magnitude less than or equal to one of $D g(x^*)$, and $E_u$ is the span of the eigenvectors corresponding to eigenvalues of magnitude greater than one of $D \phi(0)$. Then there exists a $C^r$ embedded disk $W^{cs}_{loc}$ that is tangent to $E_s$ at $x^*$ called the \emph{local stable center manifold}.  Moreover, there exists a neighborhood $B$ of $x^*$, such that $g(W^{cs}_{loc} ) \cap B \subset W^{cs} _{loc}$, and $\cap_{k=0}^\infty g^{-k} (B) \subset W^{cs}_{loc}$.
	\label{thm:stable-center-manifold}
\end{theorem}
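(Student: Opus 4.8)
The plan is to treat this as a classical statement — it is the center--stable manifold theorem for $C^r$ maps, and in practice one simply invokes \cite{shub1987global} — but here is the argument one would reconstruct, via the Lyapunov--Perron method. Since $\cX$ is a smooth manifold, first pass to a chart around $x^*$, so that after translation $g$ becomes a $C^r$ local diffeomorphism of a neighborhood of $0 \in \R^d$ fixing $0$; write $g(x) = Ax + R(x)$ with $A = \rmD g(0)$, $R(0)=0$, $\rmD R(0) = 0$. The spectral splitting $\R^d = E_s \oplus E_u$ is $A$-invariant; choose an adapted inner product so that in the induced block form $A = \diag(A_s, A_u)$ one has $\norm{A_s} \le \lambda$ and $\norm{A_u^{-1}} \le \mu^{-1}$ for constants $1 \le \lambda < \mu$, with $\mu$ below the smallest modulus of an eigenvalue of $A$ on $E_u$ (possible since those moduli are strictly larger than $1$, leaving room to take $\lambda$ just above $1$). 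Using a smooth bump function, replace $g$ outside a small ball $B$ around $0$ by its linearization $A$, so that the nonlinearity $R$ acquires globally small $C^1$ norm; this modification does not change $g$ on $B$, which is all that matters for the local conclusions.

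Second, construct the manifold as a graph. Consider forward orbits $(x_k)_{k \ge 0}$ with $x_{k+1} = g(x_k)$ lying in the weighted sequence space $\{(x_k) : \sup_k \lambda^{-k}\norm{x_k} < \infty\}$ (orbits that grow no faster than $\lambda^k$). Because $A_u$ is expanding, one solves the $E_u$-component of the orbit backward in time and the $E_s$-component forward in time, converting the statement ``$(x_k)$ is such an orbit with prescribed $x_0^s = \xi$'' into a fixed-point equation $x = \mathcal F_\xi(x)$ on this weighted space; smallness of the Lipschitz constant of $R$ makes $\mathcal F_\xi$ a contraction, yielding a unique orbit depending on $\xi \in E_s$. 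Set $h(\xi) := x_0^u$ and $W^{cs}_{loc} := \graph(h) \cap B$; since $\rmD R(0)=0$ one gets $h(0)=0$ and $\rmD h(0)=0$, i.e. tangency to $E_s$ at $x^*$, and $W^{cs}_{loc}$ is an embedded disk.

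Third, verify the two invariance properties. Local forward invariance $g(W^{cs}_{loc}) \cap B \subset W^{cs}_{loc}$ is immediate from the orbit description: truncating a $\lambda^k$-controlled orbit by one step leaves it $\lambda^k$-controlled. The property used later, $\bigcap_{k\ge 0} g^{-k}(B) \subset W^{cs}_{loc}$, also follows: a point whose entire forward orbit stays in the small ball $B$ has a bounded orbit, hence one in the weighted space, hence lies in the constructed center--stable set — which coincides with $W^{cs}_{loc}$ because the cutoff agrees with $g$ on $B$. Finally, push everything back through the chart, noting that being an embedded $C^r$ disk tangent to $E_s$ is chart-independent.

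I expect the main obstacle to be regularity, not existence: showing that $h$ is genuinely $C^r$ (rather than merely Lipschitz, which the contraction argument gives directly) requires either the fiber-contraction theorem applied to the induced action on jets, or an inductive argument that differentiates the fixed-point equation and controls the resulting formal-derivative series — this is the technical heart of the stable-manifold theorem and the reason one cites \cite{shub1987global} rather than reproducing it. A secondary subtlety is the bookkeeping of adapted norms securing the gap $\lambda < \mu$ with $\lambda \ge 1$, which is precisely what places the modulus-one eigenvalues of $\rmD g(x^*)$ into $E_s$ and hence produces a \emph{center}-stable (rather than purely stable) manifold.
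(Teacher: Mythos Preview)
The paper does not prove this theorem at all: it is stated as a citation of Theorem~III.7 in \cite{shub1987global} and then used as a black box in the proof of Theorem~\ref{thm:main-general}. So there is no ``paper's own proof'' to compare against. Your outline of the Lyapunov--Perron construction is a correct sketch of one standard route to the center--stable manifold theorem, and you have correctly flagged the real work (the $C^r$ regularity of the graph map $h$ via fiber contraction or jet bundles) as the part one cites rather than reproduces; for the purposes of this paper, simply invoking \cite{shub1987global} is exactly what the authors do and is entirely appropriate.
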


\begin{theorem}
Let $g$ be a $\cC^1$ mapping from $\cX \to \cX$ and $\det( \rmD g(x))\neq 0 $ for all $x \in \cX$. Then the set of initial points that converge to an unstable fixed point has measure zero,  $\mu(\{x_0: \lim x_k \in \cA^*_g \} ) =0$.
\label{thm:main-general}
\end{theorem}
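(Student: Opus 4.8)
The plan is to cover the global stable set of $\cA^*_g$ by countably many pieces, each contained in a neighborhood where Theorem~\ref{thm:stable-center-manifold} applies, and to argue each such piece has measure zero. First I would observe that since $g$ is a $\cC^1$ map with $\det(\rmD g(x)) \neq 0$ everywhere, the inverse function theorem makes $g$ a local diffeomorphism, so for every fixed point $x^* \in \cA^*_g$ there is a neighborhood $B_{x^*}$ on which the Stable Center Manifold Theorem gives an embedded disk $W^{cs}_{loc}(x^*)$, tangent to the span $E_s$ of the eigenvectors of $\rmD g(x^*)$ with eigenvalues of magnitude $\le 1$. Because $x^* \in \cA^*_g$, the unstable subspace $E_u$ is nontrivial, so $\dim W^{cs}_{loc}(x^*) = \dim E_s \le d-1$; hence $W^{cs}_{loc}(x^*)$ is a zero-measure subset of $\cX$. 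The key containment from the theorem is that $\bigcap_{k=0}^\infty g^{-k}(B_{x^*}) \subset W^{cs}_{loc}(x^*)$: any orbit that stays in $B_{x^*}$ forever must lie on the local stable center manifold.

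Next I would pass from local to global. If $x_0$ is an initial point with $\lim_k g^k(x_0) = x^* \in \cA^*_g$, then the tail of the orbit eventually enters and remains in $B_{x^*}$; that is, there exists $K$ with $g^K(x_0) \in \bigcap_{k=0}^\infty g^{-k}(B_{x^*}) \subset W^{cs}_{loc}(x^*)$. Therefore $x_0 \in \bigcup_{K \ge 0} g^{-K}\big(W^{cs}_{loc}(x^*)\big)$. To make this a countable union over fixed points, I would use second countability of $\cX$: the collection $\{B_{x^*}\}_{x^* \in \cA^*_g}$ of neighborhoods has a countable subcover of its union, so it suffices to work with a countable family of local stable center manifolds $\{W^{cs}_{loc}(x^*_j)\}_{j\ge 1}$ (taking care that each point of $\cA^*_g$ appearing as a limit is captured by one of the chosen neighborhoods; if $\cA^*_g$ itself is not covered I would cover $\cA^*_g$ by the $B_{x^*}$'s and extract a countable subcover, then note every limit point lies in some $B_{x^*_j}$ and its forward orbit is eventually trapped there). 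This yields
\begin{align*}
\{x_0 : \lim_k g^k(x_0) \in \cA^*_g\} \subset \bigcup_{j \ge 1} \bigcup_{K \ge 0} g^{-K}\big(W^{cs}_{loc}(x^*_j)\big).
\end{align*}

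Finally I would invoke Lemma~\ref{lem:measure-0-local-diff}: each $W^{cs}_{loc}(x^*_j)$ has measure zero, and since $\det(\rmD g) \neq 0$ everywhere, $g^{-K}(W^{cs}_{loc}(x^*_j))$ has measure zero for every $K$ (applying the lemma $K$ times, or noting $g^K$ is itself a $\cC^1$ map with nonvanishing differential). A countable union of measure-zero sets is measure zero, so the global stable set of $\cA^*_g$ has measure zero, as claimed.

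The main obstacle I anticipate is the local-to-global bookkeeping: Theorem~\ref{thm:stable-center-manifold} is purely local, giving only that orbits \emph{trapped} in a small ball $B_{x^*}$ lie on $W^{cs}_{loc}$, whereas convergence to $x^*$ only guarantees the orbit is \emph{eventually} trapped, at some finite but unbounded time $K$. Handling this requires the union over all $K$ of preimages $g^{-K}(W^{cs}_{loc})$ and the use of second countability to keep the union over fixed points countable; one must also confirm that $E_s$ is genuinely a proper subspace (so $W^{cs}_{loc}$ is measure zero) — this is exactly where the hypothesis $x^* \in \cA^*_g$, i.e. the existence of an eigenvalue of magnitude $> 1$, is used. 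A minor subtlety is that $W^{cs}_{loc}$ is only an embedded disk, not a submanifold in the cleanest sense, but any $\cC^1$ embedded disk of dimension $< d$ is locally the Lipschitz image of a lower-dimensional set and hence measure zero in $\cX$.
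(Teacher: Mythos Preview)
Your proposal is correct and follows essentially the same route as the paper's proof: apply the Stable Center Manifold Theorem at each $x^*\in\cA^*_g$ to get a neighborhood $B_{x^*}$ and a codimension-$\ge 1$ disk $W^{cs}_{loc}$ containing $\bigcap_{k\ge 0} g^{-k}(B_{x^*})$, use second countability to pass to a countable subfamily $\{B_{x^*_i}\}$, note that any convergent orbit is eventually trapped in some $B_{x^*_i}$, and conclude via Lemma~\ref{lem:measure-0-local-diff} and countable additivity. The only cosmetic difference is that the paper works with the sets $S_i=\bigcap_{k\ge 0} g^{-k}(B_{x^*_i})$ directly rather than the ambient disks $W^{cs}_{loc}(x^*_i)$, but since $S_i\subset W^{cs}_{loc}(x^*_i)$ this is immaterial; your discussion of the subtleties (eventual trapping at some finite $K$, why $\dim E_s<d$, why an embedded $\cC^1$ disk of lower dimension is measure zero) is, if anything, more explicit than the paper's.
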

\begin{proof}
For each $x^* \in \cA^* _g$, there is an associated open neighborhood $B_{x^*}$ promised by the Stable Manifold Theorem \ref{thm:stable-center-manifold}. $ \cup_{x^* \in \cA^*} B_{x^*}$ forms an open cover, and since $\cX$ is second-countable we can extract a countable subcover, so that $\cup_{x^* \in \cA^*} B_{x^*}= \cup_{i=1}^\infty B_{x^*_i}$.

Define $W=\{x_0: \lim_k  x_k \in \cA^*_g\}$. Fix a point $x_0 \in W$. Since $x_k \to x^* \in \cA^*_g $, then for some non-negative integer $T$ and all $t\ge T$,  $g^t(x_0) \in \cup_{x^* \in \cA^*} B_{x^*}$. Since we have a countable sub-cover, $g^t(x_0) \in B_{x^*_i} $ for some  $x^* _i \in \cA^*$ and all $t\ge T$. This implies that $g^t (x_0) \in \cap_{k=0}^\infty\ g^{-k} ( B_{x^*_i})$ for all $t\ge T$. By Theorem \ref{thm:stable-center-manifold}, $ S_i \triangleq \cap_{k=0}^\infty g^{-k} ( B_{x^*_i})$  is a subset of the local center stable manifold  which has co-dimension at least one, and $S_i$ is thus measure zero.

Finally, $g^T(x_0) \in S_i$ implies that $x_0 \in g^{-T} ( S_i)$. Since $T$ is unknown we union over all non-negative integers, to obtain $x_0 \in \cup_{j=0}^\infty g^{-j} (S_i)$. Since $x_0$ was arbitrary, we have shown that $W \subset \cup_{i=1}^\infty  \cup_{j=0}^\infty g^{-j} (S_i)$. Using Lemma \ref{lem:measure-0-local-diff} and that countable union of measure zero sets is measure zero, $W$ has measure zero.
\end{proof}

Next, we state a simple corollary that only requires verifying $\det( \rmD g(x) ) \neq 0$, and $\cX^* \subset \cA^*_g$.
\begin{corollary}
	\label{cor:main-corr}
	Under the same conditions as Theorem \ref{thm:main-general}, and in addition assume $\cX^* \subset \cA^* _g$, then $\mu(W_g ) =0$.
	\end{corollary}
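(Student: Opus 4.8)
The plan is to deduce the corollary directly from Theorem~\ref{thm:main-general} by a set-inclusion argument, so that no new dynamical-systems input is needed. Write $W = \{x_0 : \lim_k g^k(x_0) \in \cA^*_g\}$ for the set appearing in that theorem, and recall $W_g = \{x_0 : \lim_k g^k(x_0) \in \cX^*\}$. The only substantive observation is that $W_g \subseteq W$: if the orbit $\{g^k(x_0)\}_k$ converges to some $x^* \in \cX^*$, then the added hypothesis $\cX^* \subseteq \cA^*_g$ places $x^*$ in $\cA^*_g$, so $x_0 \in W$ by definition of $W$.

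Next I would observe that the hypotheses of Theorem~\ref{thm:main-general} are precisely those assumed here — $g$ is $\cC^1$ on $\cX$ and $\det(\rmD g(x)) \neq 0$ for every $x \in \cX$, with $\cX$ a second-countable manifold without boundary by the standing assumptions of Section~\ref{sec:smt-unstable-fp} — and invoke it to conclude $\mu(W) = 0$.

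Finally, I would conclude $\mu(W_g) = 0$ from $W_g \subseteq W$ together with the fact that the measure-zero property on a manifold passes to subsets. Concretely, fixing an atlas $\{(U_i,\phi_i)\}_{i \ge 1}$ witnessing $\mu(W) = 0$, for each $i$ the set $\phi_i(W_g \cap U_i) \subseteq \phi_i(W \cap U_i)$ is a subset of a Lebesgue-null set in $\R^d$, hence itself Lebesgue-null by completeness of Lebesgue measure; thus the same atlas witnesses $\mu(W_g) = 0$.

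There is essentially no obstacle here: the entire content of the corollary lives in Theorem~\ref{thm:main-general}, and the corollary merely repackages it under the convenient sufficient condition $\cX^* \subseteq \cA^*_g$, which is exactly the inclusion that is verified separately for each concrete first-order method (gradient descent, proximal point, block coordinate descent, mirror descent, manifold gradient descent) in Section~\ref{sec:algs}. The single point deserving an explicit line is the monotonicity of the manifold notion of measure zero under taking subsets, which reduces to completeness of Lebesgue measure on $\R^d$.
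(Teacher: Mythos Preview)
Your proposal is correct and follows essentially the same approach as the paper: both argue that $\cX^* \subset \cA^*_g$ forces $W_g \subset \{x_0 : \lim_k g^k(x_0) \in \cA^*_g\}$, then invoke Theorem~\ref{thm:main-general}. Your additional line justifying monotonicity of the measure-zero property under subsets via completeness of Lebesgue measure is a harmless elaboration that the paper leaves implicit.
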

\begin{proof}
	Since $\cX^* \subset \cA^* _g$, then $W_g \subset \{x_0: \lim_k g^k (x_0) \in \cA^*_g \}$. Using Theorem \ref{thm:main-general}, $\mu(W_g) =0$.
\end{proof}

\section{Application to Optimization}
\label{sec:algs}
\subsection{Gradient Descent and Proximal Point}
As an application of Theorem \ref{thm:main-general}, we show that gradient descent avoids saddle points. Consider the gradient descent algorithm with step-size $\alpha$:
\begin{align}
x_{k+1}=g(x_k) \triangleq x_k - \alpha \nabla f(x_k). \label{eq:g-gd}
\end{align}
\begin{assumption}[Lipschitz Gradient]
	\label{assumption:lipschitz-gradient-descent}
	Let $f \in \cC^2$, and  $\norm{\nabla^2 f(x)}_2 \le L$.
\end{assumption}

\begin{proposition}
	Every strict saddle point $x^* $ is an unstable fixed point of gradient descent, meaning $\cX^* \subset \cA^* _g$ .
	\label{prop:gd-unstable}
	\end{proposition}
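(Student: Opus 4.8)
The statement decomposes into two easy claims: that $x^*$ is a fixed point of $g$, and that the differential $\rmD g(x^*)$ has a spectral value of magnitude exceeding one. For the first claim, recall $g(x) = x - \alpha \nabla f(x)$; since $x^*$ is a strict saddle it is in particular a critical point, so $\nabla f(x^*) = 0$ and hence $g(x^*) = x^* - \alpha \nabla f(x^*) = x^*$. This is immediate and needs no hypothesis beyond criticality.

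For the second claim, I would first compute the differential of $g$. Because $f \in \cC^2$, $g$ is $\cC^1$ and its Jacobian is $\rmD g(x) = I - \alpha \nabla^2 f(x)$. Evaluating at $x^*$, the eigenvalues of $\rmD g(x^*)$ are exactly $\{1 - \alpha \lambda : \lambda \in \mathrm{spec}(\nabla^2 f(x^*))\}$, since $\nabla^2 f(x^*)$ is symmetric and hence diagonalizable with real eigenvalues. By the definition of a strict saddle, $\lambda_{\min}\big(\nabla^2 f(x^*)\big) < 0$, so there is an eigenvalue $\lambda < 0$ of the Hessian; for this $\lambda$ and any step size $\alpha > 0$ we get $1 - \alpha \lambda > 1$. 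Thus $\rmD g(x^*)$ has an eigenvalue of magnitude strictly greater than one, which together with $g(x^*) = x^*$ places $x^*$ in $\cA^*_g$. Since $x^*$ was an arbitrary element of $\cX^*$, this shows $\cX^* \subset \cA^*_g$.

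\textbf{Anticipated difficulty.} There is essentially no technical obstacle here — the argument is a one-line spectral computation — so the only thing to be careful about is bookkeeping on the step size. For this proposition, mere positivity of $\alpha$ suffices to get an eigenvalue above one; the stronger restriction $\alpha < 1/L$ (under Assumption~\ref{assumption:lipschitz-gradient-descent}) is not needed here but will be invoked separately to ensure $\det(\rmD g(x)) = \prod_i (1 - \alpha \lambda_i(\nabla^2 f(x))) \neq 0$ for all $x$, which is the remaining hypothesis required to apply Corollary~\ref{cor:main-corr}. I would state the Jacobian formula and the symmetry/diagonalizability of the Hessian explicitly, since those are the only two facts doing any work.
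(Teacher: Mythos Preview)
Your proposal is correct and matches the paper's proof essentially line for line: verify $g(x^*)=x^*$ from criticality, compute $\rmD g(x^*)=\Id-\alpha\nabla^2 f(x^*)$, and observe that a negative Hessian eigenvalue $\lambda$ yields $1-\alpha\lambda>1$. Your remark that only $\alpha>0$ is needed here (with $\alpha<1/L$ reserved for the invertibility of $\rmD g$ elsewhere) is also accurate.
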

\begin{proof}
	First we verify that critical points of $f$ are fixed points of $g$. Since $\nabla f(x) =0$, then $g(x) = x-\alpha \nabla f(x) =x$ and is a fixed point.
	
 At a strict saddle $ x^* \in \cX^*$, $\rmD g(x^*) =\Id-\alpha \nabla ^2 f(x^*)$ with eigenvalues $ 1- \alpha \lambda_i$ , where $\lambda_i $ are eigenvalues of $\nabla^2 f(x^*)$. Since $x^*$ is a strict saddle, then there is at least one eigenvalue $\lambda<0$, and $1-\alpha \lambda_i >1$.  Thus $x^* \in \cA^* _g$.
\end{proof}

\begin{proposition}
	Under Assumption \ref{assumption:lipschitz-gradient-descent} and $\alpha< \frac1L$, then $\det(\rmD g (x)) \neq 0$.
	\label{prop:gd-det-not-0}
	\end{proposition}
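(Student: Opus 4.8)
The plan is to bound the eigenvalues of $\rmD g(x) = \Id - \alpha \nabla^2 f(x)$ away from zero uniformly in $x$. First I would note that under Assumption~\ref{assumption:lipschitz-gradient-descent}, $\nabla^2 f(x)$ is a symmetric matrix with $\norm{\nabla^2 f(x)}_2 \le L$, so every eigenvalue $\lambda_i(x)$ of $\nabla^2 f(x)$ satisfies $|\lambda_i(x)| \le L$, hence $\lambda_i(x) \in [-L, L]$. Since $\nabla^2 f(x)$ is symmetric, $\rmD g(x)$ is symmetric as well, and its eigenvalues are exactly $1 - \alpha \lambda_i(x)$.

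Next I would use the hypothesis $\alpha < 1/L$, which gives $\alpha L < 1$, to conclude that for each $i$,
\begin{align*}
1 - \alpha \lambda_i(x) \ge 1 - \alpha L > 0.
\end{align*}
Thus every eigenvalue of $\rmD g(x)$ is strictly positive, and in particular nonzero. Since the determinant of a symmetric matrix is the product of its eigenvalues, $\det(\rmD g(x)) = \prod_i (1 - \alpha \lambda_i(x)) \ge (1 - \alpha L)^d > 0$, so in particular $\det(\rmD g(x)) \neq 0$ for all $x \in \cX = \reals^d$.

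There is no real obstacle here; the only points that need care are (i) invoking symmetry of $\nabla^2 f$ so that its eigenvalues are real and controlled by the operator-norm bound, and (ii) making sure the strict inequality $\alpha < 1/L$ (rather than $\alpha \le 1/L$) is what rules out the eigenvalue $1 - \alpha L$ being zero. Both are immediate, so the proof is a short computation. (As a remark, the same argument shows $\rmD g(x)$ is positive definite, which is slightly stronger than what the proposition claims but is not needed downstream.)
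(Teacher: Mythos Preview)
Your proposal is correct and follows essentially the same approach as the paper: compute $\rmD g(x) = \Id - \alpha \nabla^2 f(x)$, observe its eigenvalues are $1 - \alpha \lambda_i$, and use $\alpha < 1/L$ together with $|\lambda_i| \le L$ to conclude each factor is strictly positive, hence the determinant is nonzero. The only cosmetic difference is that the paper diagonalizes via $VDV^T$ while you invoke symmetry directly; the content is identical.
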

\begin{proof}
By a straightforward calculation
\begin{align*}
\rmD g(x) = \Id- \alpha \nabla ^2 f(x)= \Id - \alpha V D V^T,
\end{align*}
where $\nabla^2 f(x) = V DV^T$. The eigenvalues of $Dg(x)$ are $1- \alpha \lambda_i $, and so
\begin{align*}
\det(\rmD g(x) ) =\prod_{i} ( 1-\alpha \lambda_i).
\end{align*}
Using the Lipschitz gradient assumption, $\alpha< 1/|\lambda_i |$ and each term in the product is positive, so $\det(\rmD g(x)) >0$.

\end{proof}

\begin{corollary}
	Let $g$ be the gradient descent algorithm as defined in Equation \eqref{eq:g-gd}. Under Assumption \ref{assumption:lipschitz-gradient-descent} and $\alpha< \frac1L$,  the stable set of the strict saddle points has measure zero, meaning $\mu(W_g)=0$.
\end{corollary}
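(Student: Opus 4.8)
The plan is to combine the two preceding propositions with Corollary~\ref{cor:main-corr}; this final statement is essentially a bookkeeping exercise once the hypotheses of that corollary have been checked. Concretely, I would proceed as follows. First, I would note that the gradient descent map $g(x) = x - \alpha \nabla f(x)$ is $\cC^1$: since $f \in \cC^2$ by Assumption~\ref{assumption:lipschitz-gradient-descent}, the map $x \mapsto \nabla f(x)$ is $\cC^1$, hence so is $g$. Second, I would invoke Proposition~\ref{prop:gd-det-not-0}, which under Assumption~\ref{assumption:lipschitz-gradient-descent} and the step-size restriction $\alpha < 1/L$ gives $\det(\rmD g(x)) \neq 0$ for all $x \in \cX = \reals^d$. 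Third, I would invoke Proposition~\ref{prop:gd-unstable}, which shows that every strict saddle point is an unstable fixed point of $g$, i.e.\ $\cX^* \subset \cA^*_g$.

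With these three facts in hand, the hypotheses of Corollary~\ref{cor:main-corr} are satisfied verbatim: $g$ is $\cC^1$, $\det(\rmD g(x)) \neq 0$ everywhere, and $\cX^* \subset \cA^*_g$. Applying the corollary immediately yields $\mu(W_g) = 0$, which is the claim. The only subtlety worth a sentence is checking that the ambient space $\reals^d$ is indeed a $d$-dimensional manifold without boundary that is second countable (so that Theorem~\ref{thm:main-general}, which underlies the corollary, applies) — this is of course standard, with the single global chart being the identity.

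I do not expect any real obstacle here: all the substantive work — the measure-zero-preservation lemma, the stable manifold argument, the spectral computation at saddles, and the non-degeneracy of the differential — has already been carried out in Lemma~\ref{lem:measure-0-local-diff}, Theorem~\ref{thm:main-general}, and Propositions~\ref{prop:gd-unstable} and \ref{prop:gd-det-not-0}. The proof is a two-line deduction. If I wanted to be maximally self-contained I might instead unwind Corollary~\ref{cor:main-corr} and argue directly: $W_g \subset \{x_0 : \lim_k g^k(x_0) \in \cA^*_g\}$ because $\cX^* \subset \cA^*_g$, and the right-hand set has measure zero by Theorem~\ref{thm:main-general}; but invoking the corollary is cleaner. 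The one thing to be careful about in writing is not to overstate the conclusion — the result only rules out convergence \emph{to} strict saddles, not convergence to the (possibly bad) local minima, and it says nothing about whether the iterates converge at all.
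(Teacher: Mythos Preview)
Your proposal is correct and mirrors the paper's own proof essentially verbatim: the paper also simply invokes Proposition~\ref{prop:gd-unstable} for $\cX^* \subset \cA^*_g$, Proposition~\ref{prop:gd-det-not-0} for $\det(\rmD g(x)) \neq 0$, and then applies Corollary~\ref{cor:main-corr}. The extra remarks you add (that $g$ is $\cC^1$ and that $\reals^d$ is a second-countable manifold) are fine sanity checks but are left implicit in the paper.
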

\begin{proof}
	The proof is a straightforward application of the previous two Propositions and Corollary \ref{cor:main-corr}. Proposition \ref{prop:gd-unstable} shows that $\cX^* \subset \cA^* _g$, and Proposition \ref{prop:gd-det-not-0} shows that $\det( \rmD g (x)) \neq 0$. By applying Corollary \ref{cor:main-corr}, we conclude that $\mu( \{x_0: \lim_k g^k (x_0) \in \cX^*\}) =0$.
\end{proof}

\subsection{Proximal Point}
The proximal point algorithm is given by the iteration
\begin{align}
x_{k+1} = g(x) \triangleq \arg\min_{z} f(z) +\frac{1}{2\alpha} \norm{x_k -z}^2.
\label{eq:g-proxpoint}
\end{align}

\begin{proposition}
		Under Assumption \ref{assumption:lipschitz-gradient-descent} and $\alpha< \frac1L$, then
		\begin{enumerate}
			\item  $\det( \rmD g(x)) \neq 0$.
			\item Every strict stable point $x^*$ is an unstable fixed point of proximal point, meaning $\cX^* \subset \cA^* _g$.
		\end{enumerate}
	\label{prop:proxpoint}
	\end{proposition}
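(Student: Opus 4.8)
The plan is to identify the proximal map with the resolvent $(\Id + \alpha\nabla f)^{-1}$ and then read off both claims from the inverse function theorem. First I would record the first-order optimality condition for the subproblem defining $g$: under Assumption~\ref{assumption:lipschitz-gradient-descent} with $\alpha < 1/L$, the map $z \mapsto f(z) + \frac{1}{2\alpha}\norm{x-z}^2$ has Hessian $\nabla^2 f(z) + \frac1\alpha \Id \succeq (\tfrac1\alpha - L)\Id \succ 0$, so it is strongly convex with a unique minimizer, characterized by $\nabla f(z) + \frac1\alpha(z-x) = 0$, i.e. $x = (\Id + \alpha\nabla f)(g(x))$. Equivalently, writing $h(z) = \frac12\norm{z}^2 + \alpha f(z)$ we have $\nabla h = \Id + \alpha\nabla f$ and $g = (\nabla h)^{-1}$. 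Since $h$ is strongly convex and $\cC^2$, $\nabla h$ is a bijection of $\R^d$ (coercivity of $h$ gives surjectivity, strict monotonicity of $\nabla h$ gives injectivity) with everywhere-invertible Jacobian $\nabla^2 h = \Id + \alpha\nabla^2 f$, so by the inverse function theorem $g$ is $\cC^1$ and
$$\rmD g(x) = \bigl(\Id + \alpha\nabla^2 f(g(x))\bigr)^{-1}.$$

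For item~1, the eigenvalues of $\rmD g(x)$ are $(1 + \alpha\mu_i)^{-1}$, where $\mu_i$ are the eigenvalues of $\nabla^2 f(g(x))$; since $\abs{\alpha\mu_i} \le \alpha L < 1$, each factor $1 + \alpha\mu_i$ is positive, hence $\det(\rmD g(x)) = \prod_i (1 + \alpha\mu_i)^{-1} > 0 \neq 0$.

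For item~2, I would first check that a critical point $x^*$ of $f$ is a fixed point of $g$: with $\nabla f(x^*) = 0$, the point $z = x^*$ satisfies the optimality condition above, and by strong convexity it is the unique minimizer, so $g(x^*) = x^*$. Then $\rmD g(x^*) = (\Id + \alpha\nabla^2 f(x^*))^{-1}$ has eigenvalues $(1 + \alpha\lambda_i)^{-1}$, where $\lambda_i$ are the eigenvalues of $\nabla^2 f(x^*)$. At a strict saddle there is some $\lambda_i < 0$; together with $\lambda_i \ge -L$ and $\alpha < 1/L$ this gives $1 + \alpha\lambda_i \in (0,1)$, hence $(1+\alpha\lambda_i)^{-1} > 1$, so $x^* \in \cA^*_g$.

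The only genuinely delicate point is the global well-definedness and $\cC^1$-smoothness of $g$ --- that is, that $\Id + \alpha\nabla f$ is a $\cC^1$ diffeomorphism of $\R^d$ --- which is exactly what the strong convexity of $h$ buys us; everything after that is a short eigenvalue computation essentially parallel to the gradient descent case, with $1 - \alpha\lambda_i$ replaced by its reciprocal $(1 + \alpha\lambda_i)^{-1}$.
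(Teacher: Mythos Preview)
Your proposal is correct and follows essentially the same route as the paper: both derive the optimality condition $x = g(x) + \alpha\nabla f(g(x))$, differentiate it to obtain $\rmD g(x) = (\Id + \alpha\nabla^2 f(g(x)))^{-1}$, and then read off both claims from the eigenvalues $(1+\alpha\lambda_i)^{-1}$. The only cosmetic difference is that you package the derivative computation via the inverse function theorem applied to $\nabla h = \Id + \alpha\nabla f$, whereas the paper phrases it as implicit differentiation; your version is slightly more careful about the global well-definedness and $\cC^1$-smoothness of $g$, but the substance is identical.
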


\begin{proof}
	Since $\nabla f$ is $L$-Lipschitz, $f(z) +\frac{1}{2\alpha} \norm{x-z}^2$ is strongly convex for $\alpha < \frac1L$,  and the $\arg\min$ is well-defined and unique. By the optimality conditions, $g(x) +\alpha \nabla f(g(x)) =x$. By implicit differentiation, $
	\rmD g(x) +\alpha \nabla^2 f(g(x)) \rmD g(x) = \Id$, and so
	\begin{align*}
	\rmD g(x) = (\Id+\alpha \nabla^2 f(g(x)))^{-1} .
	\end{align*}
	At a strict saddle $x^*$, $\rmD g(x^*) = (\Id+\alpha \nabla^2 f(x^*) )^{-1}$, and thus has an eigenvalue greater than one. For $\alpha <\frac1L$, $\rmD g (x)$ is invertible, and thus $\det( \rmD g (x)) \neq 0$.
	
\end{proof}

By combining Proposition \ref{prop:proxpoint} and Corollary \ref{cor:main-corr}, we have the following:
\begin{corollary}[Proximal Point]
	Let $g$ be the proximal point algorithm as defined in Equation \eqref{eq:g-proxpoint}. Under Assumption \ref{assumption:lipschitz-gradient-descent} and $\alpha< \frac1L$,  the stable set of the strict saddle points has measure zero, meaning $\mu(W_g)=0$.
\end{corollary}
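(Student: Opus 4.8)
The plan is to obtain this as an immediate consequence of the general machinery already developed: Corollary~\ref{cor:main-corr} reduces the claim $\mu(W_g)=0$ to verifying two properties of the proximal point map $g$ of \eqref{eq:g-proxpoint}, namely (i) $\det(\rmD g(x))\neq 0$ for every $x\in\reals^d$, and (ii) every strict saddle is an unstable fixed point, $\cX^*\subset\cA^*_g$. Both are exactly the content of Proposition~\ref{prop:proxpoint}, so the corollary follows by chaining that proposition into Corollary~\ref{cor:main-corr}.

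Spelled out, I would first record that under Assumption~\ref{assumption:lipschitz-gradient-descent} with $\alpha<1/L$ the proximal objective $z\mapsto f(z)+\frac{1}{2\alpha}\norm{x-z}^2$ is strongly convex --- its Hessian $\nabla^2 f(z)+\frac1\alpha\Id$ has eigenvalues at least $\frac1\alpha-L>0$ --- so $g$ is single-valued on all of $\reals^d$. The first-order optimality condition reads $g(x)+\alpha\nabla f(g(x))=x$; since $z\mapsto z+\alpha\nabla f(z)$ is $\cC^1$ with everywhere-invertible Jacobian $\Id+\alpha\nabla^2 f(z)$, the implicit function theorem gives that $g$ is $\cC^1$ and $\rmD g(x)=(\Id+\alpha\nabla^2 f(g(x)))^{-1}$. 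From this the eigenvalues of $\rmD g(x)$ are $(1+\alpha\lambda_i)^{-1}$ with $\lambda_i$ the eigenvalues of $\nabla^2 f(g(x))$; these are finite and nonzero (indeed positive, since $\alpha|\lambda_i|<1$), so $\det(\rmD g(x))\neq 0$, giving (i). For (ii), a strict saddle $x^*$ has $\nabla f(x^*)=0$ hence $g(x^*)=x^*$, and some $\lambda_i<0$ yields an eigenvalue $(1+\alpha\lambda_i)^{-1}>1$ of $\rmD g(x^*)$, so $x^*\in\cA^*_g$.

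Having checked (i) and (ii), I would simply invoke Corollary~\ref{cor:main-corr} to conclude $\mu(W_g)=0$, where $W_g=\{x_0:\lim_k g^k(x_0)\in\cX^*\}$.

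The only non-routine point is establishing that $g$ is $\cC^1$ (indeed a local diffeomorphism near each fixed point), since this is what lets Theorem~\ref{thm:main-general}, and through it the Stable Manifold Theorem~\ref{thm:stable-center-manifold}, apply to $g$; the implicit function theorem argument above, already given in Proposition~\ref{prop:proxpoint}, handles it. Everything else is bookkeeping, and I do not anticipate any genuine obstacle.
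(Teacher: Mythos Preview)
Your proposal is correct and follows exactly the paper's approach: combine Proposition~\ref{prop:proxpoint} (which supplies both $\det(\rmD g(x))\neq 0$ and $\cX^*\subset\cA^*_g$) with Corollary~\ref{cor:main-corr}. You even flesh out the $\cC^1$ regularity of $g$ via the implicit function theorem a bit more carefully than the paper does, but the structure is identical.
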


\subsection{Coordinate Descent}

\begin{algorithm}[H]
		\caption{Coordinate Descent} \label{alg:cd}
	\textbf{Input:} Function $f: \mathbb{R}^d \to \R$, step size $\alpha$, initial point $x_0 \in \mathbb{R}^n$ \\
	\textbf{For} $k = 0,1,\dots$,\\
	\textbf{For index} $i = 1,\dots,n$\\
	\Indp $x^i_{k+1} \leftarrow x^i_{k} -\alpha \frac{\partial f(y^{i-1}_{k})}{\partial x_i}$, where
	\begin{eqnarray}
	y^0_k = x_k \textrm{ and }y^i_k = (x^1_{k+1},\ldots,x^i_{k+1},x^{i+1}_{k},\ldots,x^n_k)
	\end{eqnarray}
\end{algorithm}

We define $g_i(x)=x - \alpha (0,\ldots,0, \frac{\partial f(x)}{\partial x_i},0,\ldots,0)$ to be the coordinate descent update of index $i$ in Algorithm \ref{alg:cd}. One iteration of coordinate gradient descent corresponds to the update
 \begin{align}
x_{k+1} = g(x_k)= g_n \circ g_{n-1} \circ \ldots \circ g_1(x).
\label{eq:g-cd}
 \end{align}

 \begin{assumption}[Lipschitz Coordinate Gradient]
 	\label{assumption:lipschitz-cd}
 	Let $f \in \cC^2$, and  \\$\max_{i \in [d]} |e_i ^T\nabla^2 f(x) e_i| \le L_{\max}.$
 \end{assumption}

\begin{lemma}
The differential is
	\begin{equation}\label{eq:jacobian}
	\rmD g (x_k) = \prod_{j=1}^n (\Id - \alpha e_{n-j+1}e_{n-j+1}^{T} \nabla^2 f(y_k^{n-j})),
	\end{equation}
	where $e_i$ is a standard basis vector.
\end{lemma}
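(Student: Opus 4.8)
The plan is to compute the differential of the composition $g = g_n \circ g_{n-1} \circ \cdots \circ g_1$ by applying the chain rule, and to identify the differential of each individual coordinate update $g_i$ evaluated at the appropriate intermediate point.

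First I would record the chain rule for the composition: for a point $x_k$, writing $g = g_n \circ \cdots \circ g_1$, we have
\begin{equation*}
\rmD g(x_k) = \rmD g_n\big(g_{n-1}\circ\cdots\circ g_1(x_k)\big)\cdot \rmD g_{n-1}\big(g_{n-2}\circ\cdots\circ g_1(x_k)\big)\cdots \rmD g_1(x_k).
\end{equation*}
The key bookkeeping observation is that the intermediate points appearing here are exactly the $y^i_k$ defined in Algorithm~\ref{alg:cd}: indeed $y^0_k = x_k$ and, since $g_i$ updates only coordinate $i$ using the $i$-th partial derivative at its input, one checks by induction that $g_i\circ g_{i-1}\circ\cdots\circ g_1(x_k) = y^i_k$ for each $i$. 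So $\rmD g_i$ is evaluated at $y^{i-1}_k$.

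Next I would compute $\rmD g_i(y)$ for a generic point $y$. Since $g_i(x) = x - \alpha\, e_i\, \frac{\partial f(x)}{\partial x_i} = x - \alpha\, e_i e_i^T \nabla f(x)$, differentiating gives $\rmD g_i(y) = \Id - \alpha\, e_i e_i^T \nabla^2 f(y)$. Substituting $y = y^{i-1}_k$ and multiplying the factors in the order dictated by the chain rule (the $g_n$ factor on the left, down to the $g_1$ factor on the right) yields
\begin{equation*}
\rmD g(x_k) = \big(\Id - \alpha e_n e_n^T \nabla^2 f(y^{n-1}_k)\big)\big(\Id - \alpha e_{n-1}e_{n-1}^T \nabla^2 f(y^{n-2}_k)\big)\cdots\big(\Id - \alpha e_1 e_1^T \nabla^2 f(y^0_k)\big),
\end{equation*}
which, after reindexing the product with $j = 1,\dots,n$ so that $j=1$ picks out the leftmost ($i=n$) factor and $j=n$ the rightmost ($i=1$) factor, is precisely Equation~\eqref{eq:jacobian}.

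I do not expect any serious obstacle here; the only thing requiring a little care is the indexing — keeping straight that the chain rule puts the last-applied map $g_n$ on the left of the product, matching the decreasing index $n-j+1$ as $j$ runs from $1$ to $n$, and confirming that the argument of the Hessian in the $j$-th factor is $y^{n-j}_k$. A clean way to present this is to first prove the auxiliary claim $g_i\circ\cdots\circ g_1(x_k) = y^i_k$ by induction on $i$ directly from the definition of $y^i_k$, and then invoke the chain rule and the formula $\rmD g_i = \Id - \alpha e_i e_i^T \nabla^2 f$.
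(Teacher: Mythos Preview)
Your proposal is correct and follows essentially the same approach as the paper: apply the chain rule to the composition $g = g_n \circ \cdots \circ g_1$, observe that $\rmD g_i(x) = \Id - \alpha e_i e_i^T \nabla^2 f(x)$, and identify the evaluation points as the intermediates $y^{i-1}_k$. Your write-up is in fact more careful about the indexing and the identification $g_i \circ \cdots \circ g_1(x_k) = y^i_k$ than the paper's own proof, which simply invokes the chain rule and the formula for $\rmD g_i$ without spelling these out.
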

\begin{proof}
	This is an application of the chain rule. The differential of the composition of two functions $f \circ h$ is just $\rmD f(h(x)) \cdot \rmD h (x)$. By repeatedly applying this and observing that $\rmD g_i (x) = \Id - \alpha e_i e_i^{T} \nabla^2 f(x)$, we have the result.
\end{proof}

\begin{proposition} Under Assumption \ref{assumption:lipschitz-cd} and $\alpha< \frac{1}{L_{\max}}$, then $\det( \rmD g(x)) \neq 0$.
	\label{prop:cd-det}
\end{proposition}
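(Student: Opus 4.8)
The plan is to show that each factor $\Id - \alpha e_{i}e_{i}^{T}\nabla^2 f(y)$ appearing in the product \eqref{eq:jacobian} is invertible; then $\det(\rmD g(x))$, being the product of the determinants of these factors, is nonzero. The key observation is that $e_i e_i^{T}\nabla^2 f(y)$ is a rank-one matrix (it is $e_i$ times the $i$-th row of $\nabla^2 f(y)$), so its only possibly-nonzero eigenvalue is its trace, namely $e_i^{T}\nabla^2 f(y) e_i$. Hence the eigenvalues of $\Id - \alpha e_i e_i^{T}\nabla^2 f(y)$ are $1$ (with multiplicity $d-1$) and $1 - \alpha\, e_i^{T}\nabla^2 f(y) e_i$.

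First I would record the rank-one fact: for any matrix $A$, $\det(\Id - A) = 1 - \trace(A)$ when $A$ has rank one (equivalently, apply the matrix determinant lemma $\det(\Id - \alpha e_i (e_i^T \nabla^2 f(y))) = 1 - \alpha e_i^T \nabla^2 f(y) e_i$). Next I would invoke Assumption \ref{assumption:lipschitz-cd}, which gives $|e_i^{T}\nabla^2 f(y) e_i| \le L_{\max}$ for every $i$ and every point $y$. Combined with the hypothesis $\alpha < 1/L_{\max}$, this yields $\alpha\, e_i^{T}\nabla^2 f(y_k^{n-j}) e_i < \alpha L_{\max} < 1$, so $1 - \alpha\, e_i^{T}\nabla^2 f(y_k^{n-j}) e_i > 0$ and in particular is nonzero. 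Therefore each factor $\Id - \alpha e_{n-j+1}e_{n-j+1}^{T}\nabla^2 f(y_k^{n-j})$ has nonzero determinant.

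Finally, since $\det$ is multiplicative, $\det(\rmD g(x_k)) = \prod_{j=1}^n \det\!\big(\Id - \alpha e_{n-j+1}e_{n-j+1}^{T}\nabla^2 f(y_k^{n-j})\big) = \prod_{j=1}^n \big(1 - \alpha\, e_{n-j+1}^{T}\nabla^2 f(y_k^{n-j}) e_{n-j+1}\big) > 0$, which is the claim. I do not anticipate a serious obstacle here; the only mild subtlety is making sure the Hessian-diagonal bound in Assumption \ref{assumption:lipschitz-cd} is applied at the correct intermediate iterates $y_k^{n-j}$ rather than at $x_k$, but since the assumption is stated uniformly over all $x$ this is immediate. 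One could also note in passing that this simultaneously shows $g$ is a local diffeomorphism, which is what is needed to later apply Theorem \ref{thm:main-general}.
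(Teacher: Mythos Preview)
Your proposal is correct and follows essentially the same approach as the paper: both arguments reduce to showing each factor $\Id - \alpha e_i e_i^{T}\nabla^2 f(y)$ is invertible, compute its nontrivial eigenvalue as $1 - \alpha\, e_i^{T}\nabla^2 f(y) e_i$ via the matrix determinant lemma (rank-one perturbation), and conclude positivity from $\alpha < 1/L_{\max}$. Your version is in fact slightly more careful in noting that the Hessian bound must be applied at the intermediate points $y_k^{n-j}$, which is immediate from the uniformity of Assumption~\ref{assumption:lipschitz-cd}.
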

\begin{proof}
	It suffices to prove that every term of Equation \ref{eq:jacobian} is an invertible matrix. Using the matrix determinant lemma, the characteristic polynomial of the matrix $\Id - \alpha e_i e_i^{T}\nabla^2 f(x)$ is equal to $(\lambda-1)^{n-1} (\lambda - 1+\alpha \frac{\partial ^2 f(x)}{\partial x_i^2})$. For $\alpha < \frac{1}{ \frac{\partial ^2 f(x)}{\partial x_i^2}}$, the eigenvalues of $\rmD g_i $ are all positive, and thus $\rmD g_i$ is invertible.
\end{proof}

\begin{proposition}[Instability at saddle points]\label{prop:cd-unstable}
	Every strict saddle point $x^*$ is an unstable fixed point of coordinate descent, meaning $\cX^* \subset \cA^* _g $.
\end{proposition}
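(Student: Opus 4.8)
Throughout write $B \triangleq \rmD g(x^*)$ and $H \triangleq \nabla^2 f(x^*)$. The plan is to show that $B$ has an eigenvalue of modulus strictly greater than one, by exploiting that, on the quadratic model of $f$ at $x^*$, one sweep of coordinate descent is a \emph{strict} descent step. First I would check that every critical point is a fixed point of $g$: if $\nabla f(x^*)=0$ then each $g_i$ fixes $x^*$, hence so does $g=g_n\circ\cdots\circ g_1$. Evaluating the Jacobian formula \eqref{eq:jacobian} at $x^*$, where every intermediate iterate $y^i_k$ equals $x^*$, gives $B=\prod_{j=1}^n(\Id-\alpha e_{n-j+1}e_{n-j+1}^{T}H)$. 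Centering coordinates at $x^*$, this is exactly the map induced by one coordinate-descent sweep on the quadratic $q(x)\triangleq\tfrac12 x^{T}Hx$: that sweep is a linear map (its gradient $Hx$ is linear and vanishes at the center), and its Jacobian, again by \eqref{eq:jacobian}, is $B$.

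The quantitative heart is a sufficient-decrease estimate for $q$. A single coordinate step sends $z$ to $z'=z-\alpha(Hz)_i e_i$, and a direct computation gives $q(z')-q(z)=-\alpha(Hz)_i^{2}\bigl(1-\tfrac{\alpha}{2}H_{ii}\bigr)$; under Assumption \ref{assumption:lipschitz-cd} with $\alpha<1/L_{\max}$ one has $1-\tfrac{\alpha}{2}H_{ii}>\tfrac12$, so $q(z')\le q(z)-\tfrac{\alpha}{2}(Hz)_i^{2}$, with equality iff $z'=z$. Telescoping over a sweep and bounding $\norm{x-Bx}$ by the sum of the step lengths yields $q(Bx)\le q(x)-c\norm{x-Bx}^{2}$ with $c=\tfrac{1}{2\alpha n}$, together with the rigidity $q(Bx)=q(x)\iff Bx=x\iff Hx=0$. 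Equivalently, as a matrix inequality, $H-B^{T}HB\succeq 2c(\Id-B)^{T}(\Id-B)$, and $\ker(B-\Id)=\ker H$; these statements pass to the complexification $\complex^{n}$ (the Hermitian form $w\mapsto w^{*}Hw$ satisfies $(Bw)^{*}H(Bw)\le w^{*}Hw-2c\norm{(\Id-B)w}^{2}$).

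Finally I would argue by contradiction: assume $x^*\notin\cA^*_g$, i.e.\ every eigenvalue $\mu$ of $B$ has $|\mu|\le 1$. Pairing the matrix inequality with an eigenvector $Bw=\mu w$ gives $(1-|\mu|^{2})\,w^{*}Hw\ge 2c\,|1-\mu|^{2}\norm{w}^{2}$, so $|\mu|=1$ forces $\mu=1$, and then $Bw=w$ forces $Hw=0$. The same bound excludes a nontrivial Jordan block at $\mu=1$: for a chain $Bw_2=w_2+w_1$, $Bw_1=w_1$ we have $Hw_1=0$, hence $(B^{k}w_2)^{*}H(B^{k}w_2)$ is independent of $k$, contradicting the strict per-step decrease by at least $c\norm{w_1}^{2}$ unless $w_1=0$. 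Thus $\complex^{n}=V_{<}\oplus\ker H$ is a $B$-invariant splitting with $B^{k}|_{V_<}\to 0$ and $B|_{\ker H}=\Id$. For $v\in V_<$, $B^{k}v\to 0$, so the non-increasing real sequence $(B^{k}v)^{*}H(B^{k}v)$ tends to $0$, whence $v^{*}Hv\ge 0$; since $H$ annihilates $\ker H$, the two summands are $H$-orthogonal and $x^{*}Hx=v^{*}Hv\ge 0$ for all $x$, i.e.\ $\nabla^2 f(x^*)=H\succeq 0$, contradicting $\lambda_{\min}(\nabla^2 f(x^*))<0$. Hence $B$ has an eigenvalue of modulus $>1$, so $x^*\in\cA^*_g$; as $x^*$ was an arbitrary strict saddle, $\cX^*\subseteq\cA^*_g$. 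The only delicate point is this boundary case: if $\rho(B)<1$ the plain descent inequality alone gives a contradiction ($B^{k}v\to 0$ while $q(B^{k}v)\le q(v)<0$ for $v$ with $v^{T}Hv<0$), and it is precisely the strengthening from descent to \emph{sufficient} decrease that handles eigenvalues on the unit circle.
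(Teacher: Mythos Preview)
Your argument is correct and takes a genuinely different route from the paper's. Both proofs rest on the same starting observation: at a fixed point $x^*$ the Jacobian $B=\rmD g(x^*)$ coincides with one sweep of coordinate descent on the quadratic model $q(x)=\tfrac12 x^T Hx$, and under $\alpha<1/L_{\max}$ this sweep strictly decreases $q$. From there the two arguments diverge. The paper works \emph{forward}: it fixes the eigenvector $y_0$ of $H$ with the most negative eigenvalue, proves a per-sweep decrease $y_{t+1}^T H y_{t+1}\le (1+\epsilon)\,y_t^T H y_t$ (so $y_t^T H y_t\to -\infty$), hence $\norm{J^t y_0}$ grows like $(1+\epsilon/4)^t$, and concludes by Gelfand's formula. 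The hard step is their Claim~1, a quantitative statement that at least one coordinate of $Hz_j$ is large relative to $\norm{z_j}$, established by an explicit contradiction/perturbation argument. Your proof works \emph{backward} by contradiction on the spectrum of $B$: you upgrade descent to the matrix inequality $H-B^T HB\succeq 2c\,(\Id-B)^T(\Id-B)$, complexify, and use eigenvector and Jordan-chain pairings to show that $\rho(B)\le 1$ forces every unit-modulus eigenvalue to be $1$, semisimple, and with eigenvectors in $\ker H$; the resulting $B$-invariant splitting $\complex^n=V_<\oplus\ker H$ then yields $H\succeq 0$. Your approach is more spectral/conceptual, avoids the bespoke Claim~1, and isolates exactly why the strict (not merely weak) decrease is needed, namely to rule out unit-circle eigenvalues $\mu\ne 1$ and nontrivial Jordan blocks at $\mu=1$. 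The paper's approach is more constructive in that it produces an explicit vector with exponential growth and a quantitative lower bound on $\rho(J)$, at the cost of a more technical intermediate lemma.
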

\begin{proof}
	 Let $H= \nabla^2 f(x^*)$, $J= \rmD g(x^*) =\prod_{j=1}^n (\Id - \alpha e_{n-j+1}e_{n-j+1}^{T} H)$, and $y_0$ be the eigenvector corresponding to the smallest eigenvalue of $H$.
	
	 We shall prove that $\norm{J^t y_0}_2 \ge c(1+\epsilon)^t $  for some $\epsilon$ which depends on  $\eta,c>0$, but not on $t$. Applying Gelfand's theorem,
	$$
	\rho(J) = \lim_{t\to \infty} \norm{J^t }^{1/t} \ge (1+\epsilon),
	$$
	and thus $J$ has an eigenvalue of magnitude greater than $1+\epsilon$.

	We fix some arbitrary iteration $t$ and let $y_t = J^t x_0$. We will first show that there exists an $\epsilon>0$  so that
	\begin{equation}\label{eq:inequalitygeom}
	y_{t+1}^{T}Hy_{t+1} \leq (1+\epsilon) y_{t}^{T}Hy_{t},
	\end{equation}
	for all $t\in \mathbb{N}$. Let $z_1 = y_t$ and $z_{i+1} = (\Id - \alpha e_ie_i^{T}H)z_i  = z_i - \alpha (e_i^{T}Hz_i)e_i $, so that $y_{t+1} = Jy_t = z_{n+1}$. We see that the sequence $z_{i+1} ^T H z_{i+1}$ is decreasing (non-increasing),
	\begin{align}
	z_{i+1}^{T} &Hz_{i+1} = [z_i^{T} - \alpha (e_i^{T}Hz_i)e_i^{T}] H [z_i - \alpha (e_i^{T}Hz_i)e_i]\nonumber
	\\& = z_i^{T}Hz_i -\alpha (z_i^{T}He_i)(e_i^{T}Hz_i) - \alpha (e_i^{T}Hz_i)e_i^{T}Hz_i + \alpha^2 (e_i^{T}Hz_i)^2 e_i^{T}He_i \nonumber
	\\& = z_i^{T}Hz_i - \alpha(z_i^{T}He_i)^2(2 - \alpha e_i^{T}He_i) \nonumber
	\\& < z_i^{T}Hz_i - \alpha(z_i^{T}He_i)^2
	\label{eq:use-claim-here},
	\end{align}
	where the last inequality uses that $\alpha < \frac{1}{L_{\max}}$.
	
Next we use the claim to show a sufficient decrease by lower bounding $(z_i ^T H e_i )^2$.
	
	\begin{claim}\label{claim:ineq} Let $y_t$ be in the range of $H$. There exists a $j \in [d]$ so that $\alpha |e_j^{T}Hz_j| \geq \delta \norm{z_j}_2$ for some global constant $\delta>0$ that depends on $H, d$.
	\end{claim}
	\begin{proof}
		We assume that $\alpha |e_j^{T}Hz_j| < \delta \norm{z_j}_2$ for all $j \in \{1,...,n\}$, for some $\delta$ to be chosen later. For $j=1$, it holds that $\norm{y_t-z_2}_2 = \norm{z_1-z_2}_2 = \alpha |e_1^{T}Hz_1| < \delta \norm{z_1}_2 < 2\delta \norm{y_t}_2$ and $\norm{z_2}_2 < (1+2\delta)\norm{y_t}_2$. Suppose for $j \geq 2$ that $\norm{y_t-z_{j}}_2 < 2(j-1)\delta \norm{y_t}_2$ and thus $\norm{z_j}_2 < [1+2(j-1)\delta] \norm{y_t}_2.$ Using induction and triangle inequality we get
		\begin{align*}
		\norm{y_t - z_{j+1}}_2 &\leq \norm{y_t - z_{j}}_2+ \norm{z_{j} -z_{j+1}}_2\\& = 2(j-1)\delta \norm{y_t}_2 + \alpha |e_j^{T}Hz_j| \\&< 2(j-1)\delta \norm{y_t}_2 + \delta \norm{z_j}_2 \\&< 2(j-1)\delta \norm{y_t}_2+ \delta[1+2(j-1)\delta] \norm{y_t}_2
		\\& \leq  2j\delta \norm{y_t}_2,
		\end{align*}
		where we assume $\delta < \frac{1}{2d}$ so that $2(j-1)\delta<1$ for all $j \in [d]$. Using the above calculation,
		\begin{align*}
		\alpha |e_i^{T}Hy_t| &< \alpha|e_i^{T}Hz_i|+\alpha|e_i^{T}H(y_t - z_i)| \\&< \delta \norm{z_i}_2+ \alpha\norm{He_i}_2\norm{y_t - z_i}_2
		\\& < \delta \big(1 +2(i-1)\delta\big) \norm{y_t}_2 + \alpha  \norm{He_i}_2 \big(2(i-1)\delta\big) \norm{y_t}_2\\
		&\le \delta\big(  1+2d \delta + 2d \alpha L\big)\norm{y_t}.
		\end{align*}
		Thus $\alpha \norm{Hy_t}_2 < \sqrt{d}\delta\big(  1+2d \delta + 2d \alpha L\big) \norm{y_t}_2$, and
		\begin{align*}
		\sigma_{\min^+} (H) \norm{y_t}_2 \le \norm{Hy_t}_2 < \frac{\sqrt{d}}{\alpha}\delta\big(  1+2d \delta + 2d \alpha L\big) \norm{y_t}_2,
		\end{align*}
		where $\sigma_{\min^+}$ is the smallest non-zero singular value of $H$.
		Thus by choosing $\delta$ small enough such that  $$
		\sigma_{min^+} (H) \ge \frac{\sqrt{d}}{\alpha}\delta\big(  1+2d \delta + 2d\alpha L\big),
		$$
		we have obtained  a contradiction.
	\end{proof}
	
	Decompose $y_t= y_{\cN} + y_{\cR}$ into the orthogonal components defined by the nullspace  $\nullspace(H)$ and range space $\range(H)$. Notice that $J$ acts as the identity on $\nullspace(H)$, so
	\begin{align*}
	y_{t+1}&=J y_t = y_{\cN} +Jy_{\cR}.
	\end{align*}
	Define an auxiliary sequence $\bar y_{t+1} = J \bar y_t$, and $\bar y_t = y_{\cR}$. Similarly, $\bar z_1 = y_{\cR}$,  $\bar z_{i+1} = (\Id- \alpha e_i e_i ^T  H ) \bar z_i$, and $\bar z_{d+1} = \bar y_{t+1}$. It follows that
	
	\begin{align*}
	y_{t+1} ^T H y_{t+1} & = (y_{\cN} + Jy_{\cR})^T H (y_{\cN} + Jy_{\cR})\\
	&= (J y_{\cR})^T H ( J y_{\cR})\\
	&= \bar z_{d+1} ^T H \bar z_{d+1}\\
	&\le \bar z_{j+1}^T H \bar z_{j+1} \tag{non-increasing property in Equation \eqref{eq:use-claim-here}} \\
	&< \bar z_j ^T H \bar z_j - \alpha ( \bar z_j ^T H e_j )^2 \tag{using Equation \eqref{eq:use-claim-here}} \\
	&< \bar z_j ^T H \bar z_j - \frac{\delta^2}{\alpha} \norm{\bar z_j}_2 ^2  \tag{using Claim \ref{claim:ineq}}\\
	&\le \bar z_j ^T H \bar z_j + \frac{\delta^2}{\alpha L} \bar z_j ^T H \bar z_j \tag{since $L \norm{\bar z_j}^2 \ge \bar z_j ^T H \bar z_j$ }\\
	&= \left(1+\frac{\delta^2}{\alpha L}\right) \bar z_j ^T H \bar z_j\\
	&\le \left(1+\frac{\delta^2}{\alpha L}\right) \bar z_1 ^T H \bar z_1 \tag{non-increasing property}\\
	&= \left(1+\frac{\delta^2}{\alpha L}\right)\bar y_t ^T H \bar y_t\\
	&= \left(1+\frac{\delta^2}{\alpha L}\right)y_t ^T H y_t.
	\end{align*}
Let $\epsilon =\frac{\delta^2}{\alpha L}$. By inducting, and noting that $y_0 ^T H y_0 = -\lambda$,
	\begin{align*}
	y_t ^T H y_t &\le (1+\epsilon)^t y_0 ^T H y_0\\
	&=-\lambda (1+\epsilon)^t.
	\end{align*}
	Using $-\lambda \norm{y_t}_2 ^2 \le y_t ^T H y_t$,
	\begin{align*}
	- \lambda \norm{y_t}_2 ^2 &\le -\lambda (1+\epsilon)^t\\
	\norm{y_t}_2^2 &\ge (1+\epsilon)^t\\
	\norm{J^t y_0}_2 &\ge (1+ \epsilon)^{t/2}\\
	&\ge \left( 1+ \frac{\epsilon}{4}\right)^{t},
	\end{align*}
	where the last inequality uses that $\epsilon \le \frac12$. By Gelfand's theorem, we have established
	\begin{align*}
	\rho(J) \ge 1+\frac{\epsilon}{4},
	\end{align*}
	and thus $J$ has an eigenvalue of magnitude greater than one. Thus $x^* \in \cA^* _g$.
\end{proof}

By combining Propositions \ref{prop:cd-unstable}, \ref{prop:cd-det}, and Corollary \ref{cor:main-corr}, we have the following:
\begin{corollary}[Coordinate Descent]
	Let $g$ be the coordinate descent algorithm as defined in Equation \eqref{eq:g-cd}. Under Assumption \ref{assumption:lipschitz-cd} and $\alpha< \frac{1}{L_{\max}}$,  the stable set of the strict saddle points has measure zero, meaning $\mu(W_g)=0$.
\end{corollary}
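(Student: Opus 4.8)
The plan is to obtain the Coordinate Descent corollary as a direct instantiation of Corollary~\ref{cor:main-corr}, whose hypotheses are exactly: (i) $g$ is a $\mathcal{C}^1$ self-map of $\cX = \R^d$; (ii) $\det(\rmD g(x)) \neq 0$ for every $x$; and (iii) $\cX^* \subset \cA^*_g$. Conditions (ii) and (iii) are precisely the statements of Propositions~\ref{prop:cd-det} and~\ref{prop:cd-unstable}, both available under Assumption~\ref{assumption:lipschitz-cd} together with $\alpha < 1/L_{\max}$, so the only genuinely new thing to check is the smoothness condition (i).

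First I would verify (i). Each coordinate update $g_i(x) = x - \alpha e_i e_i^T \nabla f(x)$ is $\mathcal{C}^1$: since $f \in \mathcal{C}^2$ by Assumption~\ref{assumption:lipschitz-cd}, the map $x \mapsto \nabla f(x)$ is $\mathcal{C}^1$, and $g_i$ is an affine function of it. The full one-step map $g = g_n \circ g_{n-1} \circ \cdots \circ g_1$ from~\eqref{eq:g-cd} is then a composition of $\mathcal{C}^1$ maps, hence $\mathcal{C}^1$, so (i) holds on all of $\R^d$.

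Next I would record why $\alpha < 1/L_{\max}$ is the correct step-size hypothesis. Assumption~\ref{assumption:lipschitz-cd} bounds $|e_i^T \nabla^2 f(x) e_i| = |\partial^2 f(x)/\partial x_i^2| \le L_{\max}$ uniformly in $x$ and $i$, so $\alpha < 1/L_{\max}$ gives $\alpha\,|\partial^2 f(x)/\partial x_i^2| < 1$ for all $x$ and all coordinates. This is exactly what Proposition~\ref{prop:cd-det} needs in order to conclude that every factor $\Id - \alpha e_i e_i^T \nabla^2 f(x)$ appearing in~\eqref{eq:jacobian} has all eigenvalues positive, hence invertible, hence $\det(\rmD g(x)) > 0$, in particular nonzero — establishing (ii). The same bound is the hypothesis of Proposition~\ref{prop:cd-unstable}, which gives $\cX^* \subset \cA^*_g$, establishing (iii).

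Finally, with (i), (ii), and (iii) all in hand, I would invoke Corollary~\ref{cor:main-corr} to conclude $\mu(W_g) = 0$, that is, $\mu(\{x_0 : \lim_k g^k(x_0) \in \cX^*\}) = 0$. I do not expect any obstacle in this assembly step; the substantive difficulty of the coordinate-descent case lives entirely in Proposition~\ref{prop:cd-unstable}, whose proof must cope with the fact that — unlike for gradient descent — $\rmD g(x^*)$ at a strict saddle is a product of non-commuting, non-symmetric factors, so instability cannot be read off from the spectrum of $\nabla^2 f(x^*)$ and instead requires the Gelfand-formula argument with the auxiliary range-space sequence already carried out there.
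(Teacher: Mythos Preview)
Your proposal is correct and follows exactly the paper's approach: the paper simply states that the corollary follows by combining Propositions~\ref{prop:cd-det} and~\ref{prop:cd-unstable} with Corollary~\ref{cor:main-corr}. Your explicit verification that $g$ is $\mathcal{C}^1$ (as a composition of $\mathcal{C}^1$ coordinate updates) is a detail the paper leaves implicit, but otherwise the arguments are identical.
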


\begin{remark} In the worst-case, $L_{\max } =L$, but in many instances $L_{\max} \ll L$, so coordinate descent can use more aggressive step-sizes. The step-size choice $\alpha < \frac{1}{L_{\max}} $ is standard for coordinate-descent methods \cite{richtarik2011iteration}.
\end{remark}

\subsection{Block Coordinate Descent}

The results of this section are a strict generalization of the previous section, but we present the coordinate descent case separately, since the proofs are considerably shorter.

We partition the set $[d]=\{1, 2, \dots, d\}$ to $b$ blocks $\{S_1, S_2, \dots, S_b\}$ such that $[d]= \cup_{i} S_i$. For ease of notation, we define $S_0=\emptyset$.

\begin{algorithm}
	\caption{Block Coordinate Descent} \label{alg:bcd}
	\textbf{Input:} Function $f: \mathbb{R}^n \to \R$, step size $\alpha$, initial point $x_0 \in \mathbb{R}^n$ \\
	\textbf{For} $k = 0,1,\dots$,\\
	\textbf{For block} $i = 1,\dots,b$\\
	\textbf{For index} $j$ \textbf{in block} $i$\\
	\Indp $x^j_{k+1} \leftarrow x^j_{k} -\alpha \frac{\partial f(y^{S_{i-1}}_{k})}{\partial x_j}$, where
	\begin{eqnarray}
	y^{S_0}_k = x_k \textrm{ and }y^{S_i}_k = (x^{S_1}_{k+1},\ldots,x^{S_i}_{k+1},x^{S_{i+1}}_{k},\ldots,x^{S_b}_k)
	\end{eqnarray}
\end{algorithm}

We define $g_i(x)$ to be the block coordinate descent update of block $i$ in Algorithm \ref{alg:bcd}. Block coordinate gradient descent is a dynamical system
\begin{align}
x_{k+1} = g(x_k) = g_b \circ g_{b-1} \circ \ldots \circ g_1,\label{eq:g-bcd}
\end{align}
 where $g_i (x) = x - \alpha \sum_{j\in S_i} e_j^T \nabla f(x)$. We define the matrix $P_S= \sum_{i \in S} e_{i}e_{i}^{T}$, i.e., the projector onto the entries in $S$.

\begin{lemma}
The differential is
	\begin{equation}\label{eq:block_jacobian}
	\rmD g(x_k) = \prod_{i=1}^b (\Id - \alpha P_{S_{b-i+1}} \nabla^2 f(y_k^{S_{b-i}})).
		\end{equation}
\end{lemma}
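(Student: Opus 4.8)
The statement is the formula for the differential of the block coordinate descent map $g = g_b \circ g_{b-1} \circ \cdots \circ g_1$, and the plan is to proceed exactly as in the (already proven) coordinate-descent analogue: apply the chain rule for differentials to the composition, compute the differential of each single block update $g_i$, and assemble the product. First I would recall the chain rule: for a composition of two $\cC^1$ maps, $\rmD(\psi \circ \phi)(x) = \rmD\psi(\phi(x)) \cdot \rmD\phi(x)$. Iterating this over the $b$-fold composition defining $g$, one gets that $\rmD g(x_k)$ is the product $\rmD g_b(y_k^{S_{b-1}}) \cdot \rmD g_{b-1}(y_k^{S_{b-2}}) \cdots \rmD g_1(y_k^{S_0})$, where I would carefully track that the point at which $\rmD g_i$ is evaluated is the iterate fed into $g_i$ in Algorithm~\ref{alg:bcd}, namely $y_k^{S_{i-1}}$ (with $y_k^{S_0} = x_k$).

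\textbf{Single-block differential.} The second ingredient is the differential of one block update. Since $g_i(x) = x - \alpha \sum_{j \in S_i} e_j \, e_j^T \nabla f(x) = x - \alpha P_{S_i} \nabla f(x)$, differentiating gives $\rmD g_i(x) = \Id - \alpha P_{S_i} \nabla^2 f(x)$, using that $P_{S_i}$ is a constant matrix and that the differential of $\nabla f$ is $\nabla^2 f$. (One small point worth checking: the excerpt's display writes $g_i(x) = x - \alpha \sum_{j \in S_i} e_j^T \nabla f(x)$, which should be read as $x - \alpha \sum_{j \in S_i} e_j (e_j^T \nabla f(x))$, i.e. $x - \alpha P_{S_i}\nabla f(x)$; I would note this so the $P_{S_i}$ in the final formula is unambiguous.)

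\textbf{Assembly and indexing.} Substituting $\rmD g_i(y_k^{S_{i-1}}) = \Id - \alpha P_{S_i} \nabla^2 f(y_k^{S_{i-1}})$ into the chain-rule product, and then reindexing the product via $i \mapsto b - i + 1$ so that the factors are written in the order $i = 1, \dots, b$ (outermost map $g_b$ first), yields exactly
\begin{equation*}
\rmD g(x_k) = \prod_{i=1}^{b} \bigl(\Id - \alpha P_{S_{b-i+1}} \nabla^2 f(y_k^{S_{b-i}})\bigr),
\end{equation*}
which is the claimed identity. The only part requiring genuine care — and hence the "main obstacle," though it is a bookkeeping one rather than a conceptual one — is getting the index shift consistent between the block subscript $S_{b-i+1}$ and the evaluation point $y_k^{S_{b-i}}$: the $i$-th factor from the left corresponds to applying $g_{b-i+1}$ at the iterate $y_k^{S_{b-i}}$ that has had blocks $S_1, \dots, S_{b-i}$ already updated. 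Everything else is the routine chain-rule computation, identical in structure to the coordinate-descent lemma proved just above.
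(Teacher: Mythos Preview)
Your proposal is correct and follows exactly the same approach as the paper's proof: apply the chain rule to the composition $g = g_b \circ \cdots \circ g_1$, note that $\rmD g_i(x) = \Id - \alpha P_{S_i}\nabla^2 f(x)$, and assemble the factors with the appropriate evaluation points $y_k^{S_{i-1}}$. Your treatment is in fact more explicit about the indexing than the paper's own one-line argument.
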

\begin{proof}
	This is an application of the chain rule. The differential of the composition of two functions $f \circ h$ is just $\rmD f(h(x)) \cdot \rmD h(x)$. By repeatedly applying this and observing that $\rmD g_i (x) = \Id - \alpha P_{S_i}\nabla^2 f(x)$, we obtain the result.
\end{proof}

\begin{assumption}
 	\label{assumption:lipschitz-bcd}
Let $f \in \cC^2$, and $\nabla^2 f(x)_{S}$ be the submatrix of $\nabla^2 f(x)$ by extracting the rows and columns indexed by $S$.  Let $L_b =\max_{i \in [b]} \norm{\nabla^2 f(x)_{S_i}}_2$	
	\end{assumption}

\begin{proposition}Under Assumption \ref{assumption:lipschitz-bcd} and $\alpha<\frac{1}{L_b}$, then $\det( \rmD g(x) ) \neq 0$.
\label{prop:bcd-det}
\end{proposition}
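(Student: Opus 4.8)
The plan is to mimic the structure of Proposition~\ref{prop:cd-det}: it suffices to show that each factor $\Id - \alpha P_{S_{b-i+1}} \nabla^2 f(y_k^{S_{b-i}})$ appearing in the product \eqref{eq:block_jacobian} is an invertible matrix, since the determinant of a product is the product of determinants and the product of nonzero scalars is nonzero. So I would fix a block $S = S_i$, write $H = \nabla^2 f(x)$ for brevity, and analyze the matrix $\Id - \alpha P_S H$.

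The key observation is about the spectrum of $P_S H$. Writing coordinates so that $S$ comes first, $P_S$ is the projection onto the first $|S|$ coordinates, so $P_S H$ has the block form $\begin{bmatrix} H_S & H_{S,S^c} \\ 0 & 0 \end{bmatrix}$, where $H_S = \nabla^2 f(x)_S$ is the principal submatrix indexed by $S$. Hence the eigenvalues of $P_S H$ are exactly the eigenvalues of $H_S$ together with $|S^c|$ copies of $0$. Consequently the eigenvalues of $\Id - \alpha P_S H$ are $1$ (with multiplicity $|S^c|$) together with $1 - \alpha \mu$ as $\mu$ ranges over the eigenvalues of $H_S$. Since $H_S$ is symmetric, each such $\mu$ is real with $|\mu| \le \norm{H_S}_2 \le L_b$, and the hypothesis $\alpha < 1/L_b$ gives $\alpha |\mu| < 1$, so $1 - \alpha\mu > 0$. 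Therefore every eigenvalue of $\Id - \alpha P_S H$ is strictly positive, in particular nonzero, so $\det(\Id - \alpha P_S H) > 0$ and the factor is invertible. Applying this to each of the $b$ factors in \eqref{eq:block_jacobian} and multiplying gives $\det(\rmD g(x)) > 0 \neq 0$.

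The one place requiring a little care — and the main (minor) obstacle — is justifying that the nonzero eigenvalues of $P_S H$ coincide with those of the symmetric matrix $H_S$, since $P_S H$ itself is not symmetric. The block-triangular form above makes this transparent: the characteristic polynomial of $P_S H$ factors as $\lambda^{|S^c|} \det(\lambda \Id_S - H_S)$, so all nonzero eigenvalues are real and bounded by $\norm{H_S}_2$. (Alternatively, one can note $P_S H P_S$ and $P_S H$ have the same nonzero spectrum on the relevant invariant subspace, or invoke that $P_S H$ restricted to $\range(P_S)$ acts as $H_S$.) Once this spectral fact is in hand, the step-size condition $\alpha < 1/L_b$ does the rest exactly as in the coordinate-descent case, and no new idea is needed.
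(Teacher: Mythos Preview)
Your proposal is correct and follows essentially the same approach as the paper: show each factor $\Id - \alpha P_{S_i}\nabla^2 f(x)$ is invertible by computing its eigenvalues as $1$ (with multiplicity $n-|S_i|$) together with the eigenvalues of $\Id_{S_i} - \alpha \nabla^2 f(x)_{S_i}$, all of which are positive under $\alpha < 1/L_b$. Your block-triangular justification for the spectral claim is more explicit than the paper's, which simply asserts this eigenvalue structure, but the argument is the same.
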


\begin{proof}
	It suffices to prove that every term of the product \ref{eq:block_jacobian} is an invertible matrix. Every matrix of the form $\Id - \alpha P_{S_i} \nabla^2 f(x)$  has $n-|S_i|$ eigenvalues equal to one and the rest of its eigenvalues correspond
	to eigenvalues of $\Id_{S_i} - \alpha \nabla^2 f(x) _{S_i, S_i}$. Since $\alpha< \frac{1}{L_b}$, then the eigenvalues of  $\Id_{S_i} - \alpha \nabla^2 f(x) _{S_i}$ are all greater than zero. Thus each $\Id - \alpha P_{S_i} \nabla^2 f(x)$ is invertible, and $\rmD g$ is also invertible.

\end{proof}

\begin{proposition}[Stability at fixed points] Let $x^*$ be a strict saddle point of $f$. The Jacobian of the update rule of block coordinate descent computed at point $x^*$ has an eigenvalue of modulus greater than one.
	\label{prop:bcd-unstable}
\end{proposition}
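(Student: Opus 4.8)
The plan is to follow the blueprint of Proposition~\ref{prop:cd-unstable} almost verbatim, with the rank-one projectors $e_i e_i^{T}$ replaced by the block projectors $P_{S_i}$, the number of coordinates $n$ replaced by the number of blocks $b$, the scalar $|e_i^{T}Hz_i|$ replaced by the norm $\norm{P_{S_i}Hz_i}_2$, and $L_{\max}$ replaced by $L_b$. Write $H = \nabla^2 f(x^*)$ and $J = \rmD g(x^*) = \prod_{i=1}^b (\Id - \alpha P_{S_{b-i+1}} H)$, as given by Equation~\eqref{eq:block_jacobian}, and let $y_0$ be a unit eigenvector of the smallest eigenvalue $\lambda := \lambda_{\min}(H) < 0$. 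The goal is to show $\norm{J^t y_0}_2 \ge (1+\tfrac{\epsilon}{4})^t$ for some $\epsilon>0$ independent of $t$; since $\norm{J^t}_2 \ge \norm{J^t y_0}_2$, Gelfand's theorem $\rho(J) = \lim_t \norm{J^t}_2^{1/t}$ then forces $\rho(J) \ge 1+\tfrac{\epsilon}{4} > 1$, so $x^* \in \cA^*_g$.

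\emph{Within-sweep decrease.} Set $y_t = J^t y_0$, $z_1 = y_t$, and $z_{i+1} = (\Id - \alpha P_{S_i}H)z_i$, so $y_{t+1} = z_{b+1}$. Expanding $z_{i+1}^{T}Hz_{i+1}$ and using $P_{S_i}^2 = P_{S_i} = P_{S_i}^{T}$ gives $z_{i+1}^{T}Hz_{i+1} = z_i^{T}Hz_i - 2\alpha\norm{P_{S_i}Hz_i}_2^2 + \alpha^2 (P_{S_i}Hz_i)^{T}H(P_{S_i}Hz_i)$. Since $P_{S_i}Hz_i$ is supported on $S_i$, the last term is at most $\alpha^2 L_b \norm{P_{S_i}Hz_i}_2^2$, and $\alpha < 1/L_b$ yields both that $z_i^{T}Hz_i$ is non-increasing in $i$ and the strict decrease $z_{i+1}^{T}Hz_{i+1} < z_i^{T}Hz_i - \alpha\norm{P_{S_i}Hz_i}_2^2$.

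\emph{The block analogue of Claim~\ref{claim:ineq} --- the main obstacle.} I would prove: if $y \in \range(H)$ and $z_1 = y$, $z_{i+1} = (\Id - \alpha P_{S_i}H)z_i$, then there is a block index $j$ with $\alpha\norm{P_{S_j}Hz_j}_2 \ge \delta\norm{z_j}_2$ for a constant $\delta = \delta(H,d)>0$. Arguing by contradiction, suppose $\alpha\norm{P_{S_j}Hz_j}_2 < \delta\norm{z_j}_2$ for every $j$. An induction on $j$ via the triangle inequality (as in the proof of Claim~\ref{claim:ineq}) bounds $\norm{y - z_j}_2 \lesssim j\delta\norm{y}_2$ and hence $\norm{z_j}_2 \lesssim \norm{y}_2$, provided $\delta < 1/(2d)$. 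The crucial point, and the only place the partition hypothesis $\bigcup_i S_i = [d]$ is used, is the reconstruction $Hy = \sum_{j=1}^b P_{S_j}Hy = \sum_{j=1}^b\big(P_{S_j}Hz_j + P_{S_j}H(y-z_j)\big)$, which combined with the above bounds gives $\alpha\norm{Hy}_2 \lesssim \sqrt{d}\,\delta(1 + 2d\delta + 2d\alpha\norm{H}_2)\norm{y}_2$. Comparing with the lower bound $\norm{Hy}_2 \ge \sigma_{\min^+}(H)\norm{y}_2$ valid for $y \in \range(H)$ and choosing $\delta$ small enough produces the contradiction. This estimate replaces the ``$\alpha\norm{Hy_t}_2 < \sqrt{d}\,\delta(\cdots)\norm{y_t}_2$'' step of the scalar proof; I expect it to be the only genuinely non-routine part.

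\emph{Chaining the estimates.} Decompose $y_t = y_{\cN} + y_{\cR}$ along $\nullspace(H) \oplus \range(H)$. Each $P_{S_i}H$ annihilates $\nullspace(H)$, so $J$ acts as the identity there and $y_{t+1}^{T}Hy_{t+1} = (Jy_{\cR})^{T}H(Jy_{\cR})$. Running the within-sweep recursion on the auxiliary sequence started at $\bar z_1 = y_{\cR} \in \range(H)$ and combining (i) the non-increasing property, (ii) the strict decrease at the good block $j$ from the claim, (iii) $\norm{P_{S_j}H\bar z_j}_2^2 \ge (\delta^2/\alpha^2)\norm{\bar z_j}_2^2$, and (iv) $\norm{\bar z_j}_2^2 \ge \bar z_j^{T}H\bar z_j / \norm{H}_2$, gives $y_{t+1}^{T}Hy_{t+1} \le (1+\epsilon)\,y_t^{T}Hy_t$ with $\epsilon = \delta^2/(\alpha\norm{H}_2)$. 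Iterating from $y_0^{T}Hy_0 = \lambda < 0$ yields $y_t^{T}Hy_t \le (1+\epsilon)^t\lambda$, and since $\lambda\norm{y_t}_2^2 \le y_t^{T}Hy_t$ we divide by $\lambda < 0$ to get $\norm{y_t}_2^2 \ge (1+\epsilon)^t$, hence $\norm{J^t y_0}_2 \ge (1+\epsilon)^{t/2} \ge (1+\tfrac{\epsilon}{4})^t$. Gelfand's theorem then finishes the argument exactly as in the coordinate-descent case, so $J$ has an eigenvalue of modulus greater than one.
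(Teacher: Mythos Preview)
Your proposal is correct and follows essentially the same route as the paper's proof: the Gelfand-theorem reduction, the within-sweep monotonicity of $z_i^T H z_i$, a ``large block'' claim proved by contradiction, and the $\nullspace(H)\oplus\range(H)$ decomposition are all identical in structure. The only substantive difference is that you work with the $\ell_2$ quantity $\norm{P_{S_i}Hz_i}_2$ throughout, whereas the paper states Claim~\ref{claim:ineqblock} for the $\ell_1$ quantity $\sum_{j\in S_i}|e_j^T H z_i|$ and then spends a Cauchy--Schwarz step to pass back to $\ell_2$; your choice is slightly cleaner and yields $\epsilon=\delta^2/(\alpha\norm{H}_2)$ in place of the paper's $\epsilon=\delta^2/(\alpha L d)$. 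One small caution: in step (iv) the inequality you actually need is the \emph{lower} bound $\bar z_j^T H \bar z_j \ge -\norm{H}_2\norm{\bar z_j}_2^2$ (so that $-\norm{\bar z_j}_2^2 \le \bar z_j^T H \bar z_j/\norm{H}_2$), not the upper bound you wrote; the paper's coordinate-descent proof has the same slip in its justification, and in both cases the needed bound holds since $|\lambda_i(H)|\le \norm{H}_2$.
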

\begin{proof} Let $H= \nabla^2 f(x^*)$,  $J =\rmD g(x^*) = \prod_{i=1}^b (\Id - \alpha P_{S_{b-i+1}} H)$, and $y_0$ be an eigenvector of the Hessian at $x^*$.
	
	 We shall prove that $\norm{J^t y_0}_2 \ge c(1+\eta)^t $. Hence by Gelfand's theorem $J$ must have at least one eigenvalue with magnitude greater than one. The proof technique is very similar to that of the proof of Proposition \ref{prop:cd-unstable}.
	
	We fix some arbitrary iteration $t$ and let $y_t = J^t x_0$. We will first show that there exists an $\epsilon>0$,
	\begin{equation}\label{eq:inequalitygeomblock}
	y_{t+1}^{T} Hy_{t+1} \leq (1+\epsilon) y_{t}^{T}Hy_{t},
	\end{equation}
	for all $t\in \mathbb{N}$.
	Let $z_1 = y_t$ and $z_{i+1} = (\Id - \alpha P_{S_i} H)z_i  = z_i - \alpha \sum_{j \in S_i}(e_j^{T}Hz_i)e_j $, so that $y_{t+1} = Jy_t = z_{b+1}$. We get that
	\begin{align*}
	&z_{i+1}^{T}Hz_{i+1}  = \big(z_i^{T} - 2 \alpha \sum_{j \in S_i}(e_j^{T}Hz_i)e_j^{T}\big) H \big( z_i - \alpha  \sum_{j \in S_i}(e_j^{T}Hz_i)e_j\big)
	\\& = z_i^{T}Hz_i -2\alpha \sum_{j \in S_i}(e_j^{T} H z_i)^2 + \alpha^2 \big(\sum_{j \in S_i}(e_j^{T}Hz_i)e_j\big)^{T} H \big(\sum_{j \in S_i}(e_j^{T}Hz_i)e_j \big)
	\\& < z_i^{T}Hz_i - 2\alpha \sum_{j \in S_i}(e_j ^{T} H z_i)^2 + \alpha^2 L_{b}  \norm{\sum_{j \in S_i}(e_j^{T}Hz_i)e_j}_2^2 \tag{ using $\norm{H_{S_i}}_2 \le L_b$}
	\\& = z_i^{T}Hz_i - \alpha (2 -\alpha L_b  )\norm{\sum_{j \in S_i}(e_j^{T}Hz_i)e_j}_2^2 \\
	&\le z_i^{T}Hz_i  - \alpha \norm{\sum_{j \in S_i}(e_j^{T}Hz_i)e_j}_2^2 \tag{ using $\alpha L_b <1$}.
	\end{align*}
	Thus $z_{i}^T H z_i $ is a decreasing (non-increasing) sequence.

We shall prove that there exists an $i \in [b]$ so that $z_{i+1}^{T}Hz_{i+1} \leq (1+\delta) z_{i}^{T}Hz_{i}$ for some global constant $\delta$ to be chosen later.
	
	\begin{claim}\label{claim:ineqblock} Let $y_t$ be in the range of $H$. There exists an $i \in [b]$ so that $\alpha \sum_{j \in S_i} \left|e_j^{T}Hz_i\right| \geq \delta \norm{z_i}_2$ for some  $\delta>0$.
	\end{claim}
	\begin{proof}
		We assume that $\alpha \sum_{j \in S_i}\left|e_j^{T}Hz_i \right| < \delta \norm{z_i}_2$ for all $i \in [b]$. For $i=1$, it holds that $\norm{y_t-z_2}_2 = \norm{z_1-z_2}_2 = \alpha |\sum_{j \in S_1}e_j^{T}Hz_1| < \alpha \sum_{j \in S_1} |e_j^{T}Hz_1| < \delta \norm{z_1}_2 < 2\delta \norm{y_t}_2$ and $\norm{z_2}_2 < (1+2\delta)\norm{y_t}_2$. Suppose for $i \geq 2$ that $\norm{y_t-z_{i}}_2 < 2(i-1)\delta \norm{y_t}_2$ and thus $\norm{z_i}_2 < [1+2(i-1)\delta] \norm{y_t}_2.$ Using induction and triangle inequality we obtain
		\begin{align*}
		\norm{y_t - z_{i+1}}_2 &\leq \norm{y_t - z_{i}}_2+ \norm{z_{i} -z_{i+1}}_2\\& = 2(i-1)\delta \norm{y_t}_2 + \alpha \left|\sum _{j \in S_i}e_j^{T}Hz_i\right| \\&\leq 2(i-1)\delta \norm{y_t}_2 + \alpha \sum _{j \in S_i}\left|e_j^{T}Hz_i\right| \\&< 2(i-1)\delta \norm{y_t}_2 + \delta \norm{z_i}_2 \\&< 2(i-1)\delta \norm{y_t}_2+ \delta[1+2(i-1)\delta] \norm{y_t}_2
		\\& \leq  2i\delta \norm{y_t}_2,
		\end{align*}
		where we assume $\delta < \frac{1}{2b}$ so that $2(i-1)\delta<1$ for all $i \in [b]$. Using the above,
		\begin{align*}
		\alpha \sum_{j \in S_i} \left| e_j^{T}Hy_t\right| &< \alpha \sum_{j \in S_i}\left |e_j^{T}Hz_i \right|+\alpha \sum_{j \in S_i}\left|e_j^{T}H(y_t - z_i)\right| \\&< \delta \norm{z_i}_2+ \alpha\left(\sum_{j \in S_i}\norm{He_j}_2 \right)\norm{y_t - z_i}_2
		\\& < \delta [1+2(i-1)\delta]\norm{y_t}_2 + \alpha[ 2(i-1)\delta] \norm{y_t}_2 \left(\sum_{j \in S_i}\norm{He_j}_2 \right).
		\end{align*}
		Since $\norm{He_i}_2 < \sigma_{\max}(H) \leq L$, we get that $\alpha \sum_{j \in S_i} \norm{He_j}_2 < |S_i| \leq d$ and we conclude
		\begin{equation}\label{eq:boundblock}
		\alpha \sum_{j \in S_i}\left| e_j^{T}Hy_t\right| < 2d^2 \delta \norm{y_t}_2.
		\end{equation}
		Finally, using Inequality \ref{eq:boundblock} it follows that $\alpha \norm{Hy_t}_2 < 2d^2 \delta \sqrt{d}  \norm{y_t}_2$. Let $w \in \textrm{Im}(H)$ be a vector that is orthogonal to $\textrm{null}(H)$ (since $H$ is symmetric). Then it holds that $\norm{Hw}_2 \geq \sigma_{\min^+}(H) \norm{w}_2$ where $\sigma_{\min^+}(H)$ denotes the smallest positive singular value of $H$ (greater than zero). Assume that $y_t \in \textrm{Im}(H)$ and we get $\norm{Hy_t}_2 < \frac{2d^2 \delta \sqrt{d}}{\alpha} \norm{y_t}_2$. However, $\norm{Hy_t}_2 \geq \sigma_{\min^+}(H) \norm{y_t}_2$ thus by choosing $\frac{2d^2 \sqrt{d}\delta}{\alpha} < \sigma_{\min^+}(H)$ we reach a contradiction. The appropriate choice of $\delta$ is any positive constant in $(0, \frac{\alpha\sigma_{\min^+}}{2d^2\sqrt{d}})$ (since $1/2d \geq \frac{1}{2d^2\sqrt{d}}$).
	\end{proof}
	To finish the proof of the lemma, suppose that Claim \ref{claim:ineqblock} applies. Then by Cauchy-Schwarz, there exists an index $i$ such that
	\begin{align*}
	z_{i+1}^{T}Hz_{i+1} &< z_i^{T}Hz_i - \alpha \sum_{j \in S_i}(z_i^{T}He_j)^2 \\
	&< z_i^{T}Hz_i - \frac{\alpha}{d} \left(\sum_{j \in S_i} \left|z_i^{T}He_j\right|\right)^2 < z_i ^{T}Hz_i - \frac{\delta^2}{d\alpha} \norm{z_i}^2_2.
	\end{align*}
	However, $w^{T}Hw \geq \lambda_{\min}(H)\norm{w}^2_2 \geq - L \norm{w}^2_2$, hence we get that
	\begin{equation}\label{eq:ineqproof}
	z_{i+1}^{T}Hz_{i+1} < \left(1+ \frac{\delta^2}{\alpha L d}\right) z_i ^{T}Hz_i .
	\end{equation}
	By choosing $\epsilon=\frac{\delta^2}{\alpha L d}$ we showed that $y_{t+1}^{T}Hy_{t+1} \leq (1+\epsilon)y_t^{T}Hy_t$ as long as $y_t$ is in the range of $H$.
	
	 Assume that $y_t = y_{\cN} + y_{\cR}$. It is easy to see $y_t^{T}Hy_t = y_{\cR}^{T}Hy_{\cR}$ and also $y_{t+1} = Jy_t = y_{\cN} + Jy_{\cR}$, hence $y_{t+1}^{T}Hy_{t+1} =(Jy_{\cR})^{T} H (Jy_{\cR})$. Therefore from Inequality \ref{eq:ineqproof} proved above, if the starting vector is $y_{\cR}$, which Claim \ref{claim:ineqblock} applies too,  then $(Jy_{\cR})^{T}HJy_{\cR} \leq (1+\epsilon)y_{\cR }^{T}H y_{\cR} = (1+\epsilon)y_t^{T}Hy_t$.
	
	To sum up, we showed that $y_t^{T}Hy_t \leq (1+\epsilon)^t y_0^{T} H y_0$ and since $y_0$ is an eigenvector of $H$ (of norm one) with corresponding negative eigenvalue $\lambda$, it follows that $y_t^{T}Hy_t \leq \lambda (1+\epsilon)^t.$
	Finally using $y_t^{T}Hy_t \geq \lambda_{\min}(H)\norm{y_t}^2_2$, we get $\norm{y_t}_2 \geq (1+\epsilon)^{t/2} \frac{\lambda}{\lambda_{\min}(H)}$. Observe that $\frac{\lambda}{\lambda_{\min}(H)}>0$ is a positive constant, $(1+\epsilon)^{t/2} \geq (1+\epsilon/4)^t$ (since $\epsilon \leq 1/2$) and the proof follows (the parameters as claimed in the beginning will be $c = \frac{\lambda}{\lambda_{\min}(H)}$ and $\eta = \epsilon/4$).	
\end{proof}

By combining Propositions \ref{prop:bcd-unstable}, \ref{prop:bcd-det}, and Corollary \ref{cor:main-corr}, we have the following:
\begin{corollary}[Block Coordinate Descent]
	Let $g$ be the block coordinate descent algorithm as defined in Equation \eqref{eq:g-bcd}. Under Assumption \ref{assumption:lipschitz-bcd} and $\alpha< \frac{1}{L_{b}}$,  the stable set of the strict saddle points has measure zero, meaning $\mu(W_g)=0$.
\end{corollary}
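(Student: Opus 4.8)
The plan is to reduce the statement to Corollary~\ref{cor:main-corr} by verifying its two hypotheses for the block coordinate descent map $g = g_b \circ \cdots \circ g_1$. Corollary~\ref{cor:main-corr} requires that (i) $g$ is a $\cC^1$ map with $\det(\rmD g(x)) \neq 0$ for all $x \in \cX = \reals^n$, and (ii) $\cX^* \subset \cA^*_g$, i.e. every strict saddle is an unstable fixed point. Granting these, the corollary immediately gives $\mu(W_g) = 0$, which is exactly the claim.

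First I would note that $g$ is $\cC^1$: since $f \in \cC^2$ by Assumption~\ref{assumption:lipschitz-bcd}, each block update $g_i(x) = x - \alpha P_{S_i}\nabla f(x)$ is $\cC^1$, and a finite composition of $\cC^1$ maps is $\cC^1$. Then invoke Proposition~\ref{prop:bcd-det}, which uses $\alpha < 1/L_b$ to show that each factor $\Id - \alpha P_{S_{b-i+1}}\nabla^2 f(y_k^{S_{b-i}})$ in the product expression \eqref{eq:block_jacobian} for $\rmD g(x)$ is invertible, hence $\det(\rmD g(x)) = \prod \det(\,\cdot\,) \neq 0$ everywhere. This discharges hypothesis (i).

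Next I would establish hypothesis (ii). A strict saddle $x^*$ is by definition a critical point, so $\nabla f(x^*) = 0$; consequently $g_i(x^*) = x^* - \alpha P_{S_i}\nabla f(x^*) = x^*$ for every block $i$, and therefore $g(x^*) = x^*$ is a fixed point of $g$. It remains to check the instability condition $\max_i |\lambda_i(\rmD g(x^*))| > 1$, and this is precisely the content of Proposition~\ref{prop:bcd-unstable}, which shows (via the energy-decrease inequality \eqref{eq:inequalitygeomblock}, Claim~\ref{claim:ineqblock}, and Gelfand's formula) that the Jacobian $J = \rmD g(x^*)$ has spectral radius at least $1 + \epsilon/4 > 1$. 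Hence $x^* \in \cA^*_g$, and since $x^*$ was an arbitrary strict saddle, $\cX^* \subset \cA^*_g$.

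With both hypotheses verified, apply Corollary~\ref{cor:main-corr} to conclude $\mu(W_g) = 0$. I do not expect a genuine obstacle here: the substantive work has already been done in Propositions~\ref{prop:bcd-det} and~\ref{prop:bcd-unstable} and in Theorem~\ref{thm:main-general}. The only point requiring a moment of care is the (easy) observation that critical points are fixed points of the \emph{composed} map $g$ — not just of the individual $g_i$ — and the bookkeeping that the step-size condition $\alpha < 1/L_b$ is exactly what both propositions need.
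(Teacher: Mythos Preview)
Your proposal is correct and matches the paper's own argument essentially line for line: the paper simply states that the corollary follows by combining Propositions~\ref{prop:bcd-det} and~\ref{prop:bcd-unstable} with Corollary~\ref{cor:main-corr}, and you have spelled out exactly that reduction, including the easy check that critical points are fixed points of the composed map $g$.
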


\begin{remark}
In the worst-case, $L_{b } =L$, but in many instances $L_{b} \ll L$, so block coordinate descent can use more aggressive step-sizes. The step-size choice $\alpha < \frac{1}{L_{b}} $ is standard for block coordinate descent methods \cite{richtarik2011iteration}.
\end{remark}

\subsection{Manifold Gradient Descent}
\label{sec:manifold-gd}
Let $\cX$ be a submanifold of $\reals^D$, and $\cT(x)$ be the tangent space of $\cX$ at $x$. $P_{\cX}$ and $P_{\cT(x)}$ be the orthogonal projector onto $\cX$ and $\cT(x)$ respectively. Let $\bar f$ be a smooth extension of $f$ to  $\reals^D$, and $f= \bar f|_{\cM}$.
The manifold gradient descent algorithm is:
\begin{align}
x_{k+1}= g(x)  \triangleq P_{\cM} ( x_k - \alpha P_{\cT(x_k)} \nabla \bar f(x_k)). \label{eq:g-manifold-gd}
\end{align}
Recall that the Riemannian gradient $\nabla_{R} f(x) = P_{\cT(x)} \nabla \bar f(x)$, so the above iteration is precisely manifold gradient descent with $P_{\cM}$ as retraction.

%

\begin{proposition}
	\label{prop:manifold-gd-saddle-unstable}
At a strict saddle point $x^*$,  $\rmD g (x^*)$ has an eigenvalue of magnitude larger than $1$.
\end{proposition}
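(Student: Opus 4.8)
The plan is to compute $\rmD g(x^*)$ explicitly and exhibit a direction of expansion coming from the negative-curvature eigenvector of the Riemannian Hessian. First I would recall that at a critical point $x^*$ of $f$ (so $P_{\cT(x^*)}\nabla\bar f(x^*)=0$), the map $g$ fixes $x^*$, since $x^* - \alpha P_{\cT(x^*)}\nabla\bar f(x^*) = x^*$ and $P_{\cM}(x^*)=x^*$. Then I would differentiate the composition $g(x) = P_{\cM}\bigl(x - \alpha P_{\cT(x)}\nabla\bar f(x)\bigr)$ at $x^*$, restricting attention to directions $v \in \cT(x^*)$. The key simplification is that when evaluated at a critical point, the derivative of the inner map in a tangent direction $v$ equals $v - \alpha\,\rmD\bigl(P_{\cT(\cdot)}\nabla\bar f(\cdot)\bigr)(x^*)v = v - \alpha\,\nabla^2_R f(x^*)v$; here the term coming from differentiating the projector $P_{\cT(x)}$ vanishes when contracted against $\nabla\bar f(x^*)$ only in its tangential part, but the standard identity (Absil et al., Chapter 5.5) is precisely that $P_{\cT(x^*)}\circ \rmD\bigl(x\mapsto P_{\cT(x)}\nabla\bar f(x)\bigr)(x^*) = \nabla^2_R f(x^*)$ on $\cT(x^*)$. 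Differentiating the outer retraction $P_{\cM}$ at $x^*$ acts as the identity on $\cT(x^*)$ (since $\rmD P_{\cM}(x^*) = P_{\cT(x^*)}$ and the argument's variation is already tangential to first order). Hence $\rmD g(x^*)|_{\cT(x^*)} = \Id_{\cT(x^*)} - \alpha\,\nabla^2_R f(x^*)$.

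Given this, the argument mirrors Proposition~\ref{prop:gd-unstable}: since $x^*$ is a strict saddle, $\lambda_{\min}(\nabla^2_R f(x^*)) < 0$, so $\Id - \alpha\nabla^2_R f(x^*)$ has an eigenvalue $1 - \alpha\lambda_{\min} > 1$. Therefore $\rmD g(x^*)$ has an eigenvalue of magnitude strictly larger than one, and $x^* \in \cA^*_g$. To make the conclusion clean I would also note that $\cT(x^*)$ is $\rmD g(x^*)$-invariant (the image of $g$ lies in $\cM$, so $\rmD g(x^*)$ maps $\cT(x^*)$ to $\cT(g(x^*)) = \cT(x^*)$), so the eigenvalue computed on the tangent space is genuinely an eigenvalue of the full differential viewed as a map on $\cT(x^*)$, which is what Definition~\ref{def:strict_saddle} and $\cA^*_g$ refer to in the manifold setting.

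The main obstacle is the careful justification of the derivative identity $\rmD g(x^*)|_{\cT(x^*)} = \Id - \alpha\nabla^2_R f(x^*)$: one must handle (i) the derivative of $x \mapsto P_{\cT(x)}$ and confirm its contribution is absorbed into the definition of the Riemannian Hessian at a critical point, and (ii) the derivative of the retraction $P_{\cM}$, using that any retraction $R$ satisfies $\rmD R_{x^*}(0) = \Id_{\cT(x^*)}$. Both are standard facts about retractions and the Riemannian Hessian (Absil, Mahony \& Sepulchre, Chapter 5.5 and Section 4.1), and at a critical point the second-order correction terms that distinguish different retractions all vanish, so the computation is robust; the write-up should invoke those references rather than recompute the projector derivatives by hand. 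Once the identity is in place, the instability conclusion is immediate.
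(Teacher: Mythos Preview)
Your proposal is correct and matches the paper's proof essentially line-for-line: both use $\rmD P_{\cM}(x^*)=P_{\cT(x^*)}$ together with the identity $P_{\cT(x^*)}\,\rmD\bigl(P_{\cT}\nabla\bar f\bigr)(x^*)=\nabla^2_R f(x^*)$ to obtain $\rmD g(x^*)v=(1-\alpha\lambda_v)v$ for a negative-curvature eigenvector $v$. The only cosmetic difference is that the paper works with a single eigenvector rather than writing out the full restriction $\rmD g(x^*)|_{\cT(x^*)}=\Id-\alpha\nabla^2_R f(x^*)$, and it cites \cite{lewis2008alternating} and \cite{absil2013extrinsic} for the two projector identities rather than the retraction chapter of \cite{absil2010optimization}.
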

\begin{proof}
Since $x^*$ is a strict saddle, the Riemannian Hessian $\nabla ^2 _{R} f (x^*)$ has a negative eigenvalue $\lambda_v$ and eigenvector $v$, and $P_{\cT(x^*)} \nabla \bar f(x^*) =0$.

Using \cite[Lemma 4]{lewis2008alternating} , $\rmD P_{\cM} (x) = P_{\cT(x)}$ for $x \in \cM$,
\begin{align*}
\rmD g(x^*) v  =P_{\cT(x^*)} v- \alpha P_{\cT(x^*)} \rmD( P_{\cT} \nabla \bar f)(x^*) v.
\end{align*}
Using \cite[Equation 4]{absil2013extrinsic}, $P_{\cT(x)}\rmD (P_{\cT} \nabla \bar f)(x) v = \nabla_{R} ^2 f(x) v$, so
\begin{align*}
\rmD g(x^*) v= v- \lambda_v v .
\end{align*}
Thus $v$ is an eigenvector of $\rmD g(x^*)$ with eigenvalue $1-\lambda_v >1$.
\end{proof}

\begin{proposition}
For a compact submanifold $\cM$, there is a strictly positive $\alpha$ such that $\det( \rmD g) \neq 0$. \label{prop:manifold-gd-det}
\end{proposition}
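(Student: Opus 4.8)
The plan is a soft continuity-and-compactness argument anchored at the step size $\alpha = 0$. Write $g = g_\alpha = P_\cM \circ h_\alpha$, where $h_\alpha(x) = x - \alpha P_{\cT(x)} \nabla \bar f(x) = x - \alpha \nabla_R f(x)$ maps $\cM$ into the ambient space $\reals^D$. Because $\cM$ is compact, two ingredients are available: first, $M \triangleq \max_{x \in \cM} \norm{\nabla_R f(x)} < \infty$; second, $\cM$ admits a tubular neighborhood $U \supseteq \cM$ of some uniform radius $\rho > 0$ on which the nearest-point projection $P_\cM$ is well defined and as smooth as $\cM$ (in particular $C^1$). Since $\norm{h_\alpha(x) - x} \le \alpha M$, for every $\alpha < \rho / M$ the map $h_\alpha$ sends $\cM$ into $U$, so $g_\alpha \colon \cM \to \cM$ is a well-defined $C^1$ map and $\rmD g_\alpha(x) = \rmD P_\cM(h_\alpha(x)) \circ \rmD h_\alpha(x)$ is a bona fide linear map $\cT(x) \to \cT(g_\alpha(x))$.

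The key observation I would use is that for a $C^1$ self-map of a $d$-dimensional manifold, $\det(\rmD g_\alpha(x))$ (computed in any chart) is nonzero precisely when $\rmD g_\alpha(x) \colon \cT(x) \to \cT(g_\alpha(x))$ is injective. So I would work with the chart-independent quantity $\phi(x,\alpha)$ equal to the $d$-dimensional volume-scaling factor of $\rmD g_\alpha(x)$ relative to the Riemannian metrics on the two tangent spaces, i.e. $\sqrt{\det\big(\rmD g_\alpha(x)^\top \rmD g_\alpha(x)\big)}$ in orthonormal frames. Then $\phi$ is well defined, nonnegative, and continuous on $\{(x,\alpha) : x \in \cM,\ 0 \le \alpha < \rho / M\}$ (continuity because $g_\alpha$ is jointly $C^1$ and the metric is continuous), and $\phi(x,\alpha) > 0$ is equivalent to $\det(\rmD g_\alpha(x)) \ne 0$.

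Next I would evaluate at $\alpha = 0$. There $h_0$ is the inclusion and $P_\cM$ restricted to $\cM$ is the identity with $\rmD P_\cM(x) = P_{\cT(x)}$ by \cite[Lemma 4]{lewis2008alternating} (already invoked above), so $g_0 = \id_\cM$, $\rmD g_0(x) = \Id_{\cT(x)}$, and hence $\phi(x,0) = 1$ for every $x \in \cM$. Fixing any $\alpha_0 \in (0, \rho / M)$, the domain $\cM \times [0,\alpha_0]$ is compact, so $\phi$ is uniformly continuous on it; therefore there is $\alpha^\star \in (0,\alpha_0]$ with $\phi(x,\alpha) \ge \frac12$ for all $x \in \cM$ and all $\alpha \in [0,\alpha^\star]$. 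For any such $\alpha$, $\rmD g_\alpha(x)$ is injective at every point, i.e. $\det(\rmD g_\alpha(x)) \ne 0$ for all $x \in \cM$, which is the assertion.

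The only real obstacle is uniformity over $\cM$: the elementary pointwise fact that $\rmD g_\alpha(x)$ is a small perturbation of $\Id$ must hold with a single threshold $\alpha^\star$ valid simultaneously at every $x$, and this is exactly what compactness supplies — finiteness of $M$, a uniform tubular radius $\rho$, and uniform continuity of $\phi$. A secondary technical point is the tubular neighborhood theorem guaranteeing the existence and $C^1$-smoothness of $P_\cM$ near a compact embedded submanifold; with that granted the rest is routine. If one prefers a quantitative threshold to the soft argument, one can instead expand $\rmD g_\alpha(x) = \rmD P_\cM(h_\alpha(x)) \circ \big(\Id - \alpha\,\rmD(P_{\cT}\nabla\bar f)(x)\big)$ and argue directly: the factor $\Id - \alpha\,\rmD(P_{\cT}\nabla\bar f)(x)$ is injective on $\cT(x)$ once $\alpha < 1 / \max_{x \in \cM}\norm{\rmD(P_{\cT}\nabla\bar f)(x)}$ and carries $\cT(x)$ to a subspace whose distance from $\cT(x)$ is controlled by $\alpha$; meanwhile $\rmD P_\cM(h_\alpha(x))$ stays close to $P_{\cT(x)}$, which restricts to the identity on $\cT(x)$; composing, injectivity persists for $\alpha$ below a threshold expressible through these compactness-controlled constants and the reach of $\cM$.
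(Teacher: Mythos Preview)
Your proof is correct and follows essentially the same approach as the paper: both use compactness of $\cM$ to obtain a uniform tubular-neighborhood radius and a uniform gradient bound, observe that $\rmD g_0(x)$ is the identity (so the determinant equals $1$ at $\alpha=0$), and then invoke a continuity-plus-compactness argument to keep the determinant away from zero for all sufficiently small $\alpha$. The only cosmetic difference is that the paper controls $h_x(\alpha)=\det(\rmD g_\alpha(x))$ via a uniform bound on $\big|\tfrac{d h_x}{d\alpha}\big|$ (a mean-value-type estimate) to get an explicit threshold $\alpha<1/B$, whereas you appeal directly to uniform continuity of the volume-scaling factor $\phi$ on $\cM\times[0,\alpha_0]$.
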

\begin{proof}
Since $\cM$ is a compact smooth manifold, $P_{\cM}$ is unique and smooth in a neighborhood of radius $r$ of the manifold \cite{absil2012projection}. Letting $\alpha < \frac{r}{\max_{x \in \cM} \norm{\nabla f(x)}}$,  $P_{\cM} (x- \alpha P_{\cT(x)} \nabla f(x))$ and its derivatives exist.

We wish to show that $\rmD g(x) = \rmD P_{\cM} ( x- \alpha P_{\cT(x)} \nabla f(x)) ( \Id- \alpha \rmD(P_{\cT} \nabla f) (x) )$ is invertible. Define $h_x(\alpha) = \det\left( \rmD P_{\cM} ( x- \alpha P_{\cT(x)} \nabla f(x)) ( \Id- \alpha \rmD(P_{\cT} \nabla f) (x) )\right)$.  Using $\rmD P_{\cM} (x) = P_{\cT(x)}$ \cite{absil2012projection}, $h_x(0) =1 $. Since $B:=\max_{x \in \cM, \alpha<\frac{r}{\max_{x \in \cM} \norm{\nabla f(x)}}}| \frac{d h_x }{d \alpha} (\alpha) | <\infty$, we see that for \[\alpha < C_{\cX,f}\triangleq \min \left(\frac{r}{\max_{x \in \cM} \norm{\nabla f(x)}}, \frac1B \right)\] that $h_x (\alpha)$ is positive, so $\rmD g$ is invertible.
\end{proof}

\begin{corollary}
	Let $g$ be the manifold gradient descent algorithm of Equation \eqref{eq:g-manifold-gd}, and $\cX$ be a compact sub-manifold of $\reals^D$. Then there is a $C_{\cX,f} >0$, that only depends on the properties of $\cX$ and $f$, such that for any step-size $\alpha<C_{\cX, f} $  ,  the stable set of the strict saddle points has measure zero, meaning $\mu(W_g) =0$.
	\end{corollary}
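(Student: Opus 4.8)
The plan is to follow the template of the gradient-descent and coordinate-descent corollaries: fix the admissible step-size range, verify the two hypotheses of Corollary~\ref{cor:main-corr} — global non-degeneracy of $\rmD g$ and the inclusion $\cX^*\subset\cA^*_g$ — and then invoke it.

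First I would pin down the threshold. Take $C_{\cX,f}$ to be exactly the constant exhibited in the proof of Proposition~\ref{prop:manifold-gd-det}, namely $C_{\cX,f} = \min\!\left(\frac{r}{\max_{x\in\cM}\norm{\nabla f(x)}},\ \frac1B\right)$, where $r>0$ is a radius such that $P_{\cM}$ is well defined and smooth on the $r$-tube around $\cM$, and $B = \sup_{x\in\cM,\ \alpha}\left|\frac{dh_x}{d\alpha}(\alpha)\right|<\infty$. Compactness of $\cM$ together with $f\in\cC^2$ makes both $r$ and $B$ finite and positive, so $C_{\cX,f}$ depends only on $\cX$ and $f$. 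For any $\alpha<C_{\cX,f}$, Proposition~\ref{prop:manifold-gd-det} then guarantees that the iteration \eqref{eq:g-manifold-gd} is well defined (its argument stays in the $r$-tube, so $g$ maps $\cX$ into $\cX$), that $g$ is $\cC^1$, and that $\det(\rmD g(x))\neq 0$ for every $x\in\cX$. Since a compact manifold is second countable, all the standing hypotheses of Theorem~\ref{thm:main-general} are met.

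Next I would establish $\cX^*\subset\cA^*_g$ in two steps. (i) A critical point $x^*$ of $f$ is a fixed point of $g$: the Riemannian gradient $P_{\cT(x^*)}\nabla\bar f(x^*)$ vanishes there, so $g(x^*)=P_{\cM}(x^*)=x^*$ because $x^*\in\cM$. (ii) If in addition $x^*$ is a strict saddle, Proposition~\ref{prop:manifold-gd-saddle-unstable} produces a tangent vector $v$ with $\rmD g(x^*)v=(1-\lambda_v)v$ and $\lambda_v<0$, so $\rmD g(x^*)$ has an eigenvalue $1-\lambda_v>1$, whence $\max_i|\lambda_i(\rmD g(x^*))|>1$ and $x^*\in\cA^*_g$. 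Combining (i) and (ii) gives $\cX^*\subset\cA^*_g$, and Corollary~\ref{cor:main-corr} then yields $\mu(W_g)=0$.

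The step I expect to demand the most care is the bookkeeping of $C_{\cX,f}$: the single threshold must simultaneously ensure (a) that $x-\alpha P_{\cT(x)}\nabla\bar f(x)$ stays inside the reach-$r$ tube, so that $P_{\cM}$ — and therefore $g$ and $\rmD g$ — is smooth (the condition $\alpha\max_x\norm{\nabla f(x)}<r$), and (b) that $h_x(\alpha)=\det(\rmD g(x))$, which equals $1$ at $\alpha=0$ since $\rmD P_{\cM}(x)=P_{\cT(x)}$, never crosses $0$ as $\alpha$ grows. Uniformity over $x\in\cM$ is the crux: Proposition~\ref{prop:manifold-gd-det} is a pointwise statement, and it is compactness of $\cM$ — yielding a finite $\max_x\norm{\nabla f(x)}$ and a finite uniform bound $B$ on $|dh_x/d\alpha|$ — that upgrades it to the global non-degeneracy $\det(\rmD g(x))\neq 0$ on all of $\cX$ that Theorem~\ref{thm:main-general} and Corollary~\ref{cor:main-corr} require. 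Everything else is routine.
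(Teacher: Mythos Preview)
Your proposal is correct and follows essentially the same route as the paper: invoke Proposition~\ref{prop:manifold-gd-det} for the global non-degeneracy $\det(\rmD g)\neq 0$ (with the same constant $C_{\cX,f}$), invoke Proposition~\ref{prop:manifold-gd-saddle-unstable} for the inclusion $\cX^*\subset\cA^*_g$, and then apply Corollary~\ref{cor:main-corr}. Your write-up is simply more explicit than the paper's three-line proof, spelling out the fixed-point verification and the role of compactness in securing a uniform threshold; note that Proposition~\ref{prop:manifold-gd-det} as stated already delivers the global (not merely pointwise) conclusion, so the ``upgrade'' you flag is part of that proposition's proof rather than an additional step here.
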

\begin{proof}
	The proof is a straightforward application of the previous two Propositions and Corollary \ref{cor:main-corr}. Proposition \ref{prop:manifold-gd-saddle-unstable} shows that $\cX^* \subset \cA^* _g$, and Proposition \ref{prop:manifold-gd-det} shows that $\det( \rmD g (x)) \neq 0$. By applying Corollary \ref{cor:main-corr}, we conclude that $\mu( \{x_0: \lim_k g^k(x_0) \in \cX^*\}) =0$. The parameter $C_{\cX,f}$ is specified in the proof of Proposition \ref{prop:manifold-gd-det}.
\end{proof}

\subsection{Mirror Descent}
In this section, we consider the mirror descent algorithm. Let $\cD$ be a convex open subset of $\reals^D$, and $\cX = \cD \cap \cM$ for some affine space $\cM$. Given a mirror map $\Phi$, we define the mirror descent algorithm in Algorithm \ref{alg:md}.
\begin{algorithm}[!h]
	\caption{Mirror Descent} \label{alg:md}
	\textbf{Input:} Function $f: \cX \to \R$, step size $\alpha$, and initial point $x_0$. \\
	\textbf{For} $k = 0,1,\dots$,\\
	\Indp Update $$x_{k+1} \leftarrow h\left(\nabla \Phi(x_k) -\alpha\nabla f(x_k)\right),$$ where $ h(x) \triangleq \arg\max_{z \in \cX}  z^T x - \Phi(z)$.
\end{algorithm}

Before we continue, we provide an example of a commonly used instantiation of mirror descent known as the Multiplicative Weights algorithm.
\begin{example}[Probability Simplex]\label{example:prob-simplex}
Define the mirror map $\Phi(x) = \sum x_i \log x_i $, with $\cD$ being the positive orthant $\reals^{D} _{>0}$, and affine space $\cM = \{x: \sum_i x_i =1\}$. The domain is $\cX = \cD \cap \cM$ which is the interior of probability simplex. The mirror descent algorithm corresponds to the update :
\begin{align*}
 x_i \leftarrow \frac{x_i \exp( - \alpha \frac{\partial f}{\partial x_i} (x) )  }{\sum_{j} x_j \exp( - \alpha \frac{\partial f}{\partial x_j} (x) )}.
\end{align*}
	\end{example}

We define $\bar \cX$ to be the closure of $ \cX$, and  $\partial \cX = \bar \cX \backslash \cX $ to be the relative boundary of $\cX$. Due to the affine constraint, $\cX$ may not be full-dimensional, so we define the appropriate notions of gradient and Hessian.  Let $\cT$ be the tangent space of $\cM$. The Riemannian gradient is $\nabla_{R} \Phi(x) = P_{\cT} \nabla \Phi(x) $. Similarly the Riemannian Hessian is $\nabla^2_{R} \Phi (x) = P_{\cT} \nabla^2 \Phi(x) P_{\cT}$ and is a linear mapping from $\cT \to \cT$. Finally, the mirror descent mapping is defined as
\begin{align}
g(x) =h \circ F (x) \label{eq:g-md},
 \end{align} with $F(x) = \nabla \Phi(x) - \alpha \nabla f(x)$ and $h(x) =\arg\max_{z \in \cX} z^T x - \Phi(z)$.

\begin{assumption}[Mirror Map]
\label{assumption:mirror-map}
We say that $\Phi$ is a \emph{mirror} map if it satisfies the following properties:
\begin{enumerate}
	\item$\Phi: \cD \to \reals$ is $\cC^2$ and strictly convex.
	\item  The gradient of $\Phi$ is surjective onto $\reals^D$, that is $\nabla \Phi(\cD)=\reals^D$.
	\item $\nabla_{R} \Phi$ diverges on the relative boundary of $\cX$, that is $\lim_{x \to \partial \cX} \norm{\nabla_{R} \Phi (x)} =\infty$. Furthermore, the negative gradient points inwards, that is for $x_0 \in \partial \cX$,   $\lim_{x \to x_0 } - \frac{\nabla_{R} \Phi(x)}{\norm{\nabla_{R} \Phi(x)}} \in T_{\bar \cX} (x_0)$, where $T_{\cK}$ denotes the tangent cone of the set $\cK$.
	 \end{enumerate}
\end{assumption}

 \begin{assumption}[Strong convexity of $\Phi$ and Lipschitz Gradient]
 	Let $\Id_{\cT}$ be the identity mapping on $\cT$. We assume that
 	\begin{enumerate}
 		\item $\Phi$ is $\mu$-strongly convex, meaning $\nabla^2 _{R} \Phi(x) \succeq \mu \Id_{\cT}$.
 		\item $f$ has $L$-Lipschitz gradient, meaning $\nabla^2 _{R} f(x) \preceq L \Id_{\cT}$.
 	\end{enumerate}
 	\label{assumption:strcvx-lip}
 \end{assumption}
 \begin{remark}
 	In the simplex example of Example \ref{example:prob-simplex}, the strong convexity parameter satisfies $\mu \ge 1$.
 \end{remark}

 We first express the mapping $g$ as a composition of simple mappings.
\begin{lemma}
	Assume that $\Phi$ is a $\mu$-strongly convex mirror map. The mirror descent algorithm can be equivalently expressed as
$g(x) = (\nabla_{R} \Phi)^{-1} \circ P_{\cT}  \circ F (x)$, and $(\nabla_{R} \Phi)^{-1}:\cT \to \cX $ is a local diffeomorphism.
\label{lem:md}
	\end{lemma}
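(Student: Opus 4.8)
The plan is to identify the proximal step $h$ with a composition built from the inverse Riemannian gradient, and then to analyze $\nabla_{R}\Phi$ on its own.

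First I would rewrite $h$. Fix $y\in\reals^D$ and consider $\psi_{y}(z)\triangleq z^{T}y-\Phi(z)$ on $\cX=\cD\cap\cM$. Since $\Phi$ is strictly convex (Assumption~\ref{assumption:mirror-map}(1)), $\psi_{y}$ is strictly concave, so $h(y)=\arg\max_{z\in\cX}\psi_{y}(z)$ is unique whenever it exists; existence follows from strong concavity of $\psi_y$ (Assumption~\ref{assumption:strcvx-lip}). Writing $\cM=x_{0}+\cT$ and differentiating along directions in $\cT$, the Riemannian gradient of $\psi_{y}$ on $\cM$ at $z$ is $P_{\cT}y-\nabla_{R}\Phi(z)$. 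I would argue the maximizer lies in the relatively open set $\cX$, not on $\partial\cX$: by Assumption~\ref{assumption:mirror-map}(3), as $z\to\partial\cX$ we have $\norm{\nabla_{R}\Phi(z)}\to\infty$ with $-\nabla_{R}\Phi(z)/\norm{\nabla_{R}\Phi(z)}$ tending into the tangent cone $T_{\bar{\cX}}$, so the directional derivative $\langle P_{\cT}y-\nabla_{R}\Phi(z),\,u\rangle$ along an inward direction $u$ is eventually strictly positive; hence $\psi_{y}$ strictly increases moving inward near the boundary and no boundary point is a maximizer. Consequently the unique maximizer $z^{\star}=h(y)\in\cX$ is characterized by the stationarity condition $\nabla_{R}\Phi(z^{\star})=P_{\cT}y$, i.e. $h=(\nabla_{R}\Phi)^{-1}\circ P_{\cT}$, and composing with $F$ gives $g=h\circ F=(\nabla_{R}\Phi)^{-1}\circ P_{\cT}\circ F$, which is the claimed identity.

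Next I would check $\nabla_{R}\Phi\colon\cX\to\cT$ is a local diffeomorphism. Viewing it as the gradient of the strictly convex function $\Phi|_{\cM}$ in $\cT$-coordinates, it is injective (equal gradients would contradict strict convexity). It is $\cC^{1}$ since $\Phi\in\cC^{2}$, with differential $\rmD(\nabla_{R}\Phi)(x)=\nabla_{R}^{2}\Phi(x)=P_{\cT}\nabla^{2}\Phi(x)P_{\cT}$, which by Assumption~\ref{assumption:strcvx-lip}(1) satisfies $\nabla_{R}^{2}\Phi(x)\succeq\mu\Id_{\cT}\succ0$ and hence is invertible; the inverse function theorem then makes $\nabla_{R}\Phi$ a local diffeomorphism onto an open subset of $\cT$, and with injectivity a diffeomorphism onto its image. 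To get surjectivity onto all of $\cT$: the image is open by the previous sentence, and it is closed, since if $\nabla_{R}\Phi(x_{n})\to w\in\cT$ then the $x_n$ cannot accumulate on $\partial\cX$ (there $\norm{\nabla_{R}\Phi}\to\infty$) nor escape to infinity in $\cX$ ($\mu$-strong convexity gives $\norm{\nabla_{R}\Phi(x)-\nabla_{R}\Phi(x')}\ge\mu\norm{x-x'}$, so bounded images have bounded preimages), so a subsequence converges to some $x^{\star}\in\cX$ and $w=\nabla_{R}\Phi(x^{\star})$. Being clopen in the connected space $\cT$, the image equals $\cT$, so $\nabla_{R}\Phi\colon\cX\to\cT$ is a bijective local diffeomorphism and $(\nabla_{R}\Phi)^{-1}\colon\cT\to\cX$ is everywhere-defined and smooth.

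The hard part will be the boundary analysis: extracting from Assumption~\ref{assumption:mirror-map}(3) both that the $\arg\max$ defining $h$ always lands in the interior $\cX$ and that $\nabla_{R}\Phi$ is proper onto $\cT$. The remaining ingredients — the stationarity characterization of $h$, injectivity from strict convexity, and the local-diffeomorphism claim from the inverse function theorem with $\nabla_{R}^{2}\Phi\succeq\mu\Id_{\cT}$ — are routine.
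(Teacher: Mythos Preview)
Your proposal is correct and follows the same skeleton as the paper: show the maximizer defining $h$ lies in the relative interior $\cX$, read off the stationarity condition $\nabla_{R}\Phi(h(w))=P_{\cT}w$, and invoke the inverse function theorem with $\nabla_{R}^{2}\Phi\succ 0$ to get the local diffeomorphism.

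Two differences in execution are worth noting. For the boundary argument, the paper proceeds by an explicit contradiction via cone polarity: if $z^{*}\in\partial\cX$, first-order optimality forces $-\nabla_{R}H/\norm{\nabla_{R}H}\in N_{\bar\cX}(z^{*})$, while Assumption~\ref{assumption:mirror-map}(3) places $-\nabla_{R}\Phi/\norm{\nabla_{R}\Phi}\in T_{\bar\cX}(z^{*})$; pairing these and using $\norm{\nabla_{R}\Phi}\to\infty$ gives $0\ge 1$. Your directional-derivative sketch encodes the same mechanism but is less formal; if you keep it, make precise which inward direction $u$ you choose and why the inner product is eventually positive. Second, your separate clopen argument for surjectivity of $\nabla_{R}\Phi$ onto $\cT$ is correct but unnecessary: once you know $h(w)\in\cX$ exists uniquely for every $w$, the identity $\nabla_{R}\Phi(h(w))=P_{\cT}w$ already gives surjectivity, and uniqueness of the (concave) maximizer gives injectivity, since any $z$ with $\nabla_{R}\Phi(z)=P_{\cT}w$ satisfies the sufficient first-order condition. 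The paper simply observes this, so that $(\nabla_{R}\Phi)^{-1}\colon\cT\to\cX$ is well defined directly from the analysis of $h$, without a fresh topological argument.
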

\begin{proof}
	Recall that $h(w) =\arg\max_{z \in \cX} z^Tw - \Phi(z)$. Let  $H(z) =  -z^Tw +\Phi(z)$. By strong convexity, $H$ attains an unique minimizer in  $\bar \cX$.

We first show that the minimizer $z^* \notin \partial \cX$. For contradiction, let us assume $z^* \in \partial \cX$. By the first-order optimality conditions, $\lim_{x \to z^*} -\frac{\nabla_{R} H(x)}{\norm{\nabla_{R} H(x)}} \in N_{\bar \cD}(z^*) $, where $N_{\bar \cD}$ is the normal cone of the closure of $\cD$. Using \cite{rockafellar2009variational}[Theorem 6.9 and 6.42] and $N_{\cM} (x)  = \cT^\perp$, $N_{\bar \cX}  (x) \supset N_{\bar \cD} (x) + \cT^\perp$, where $+$ denotes Minkowski sum. Thus $\lim_{x \to z^*} -\frac{\nabla_{R} H(x)}{\norm{\nabla_{R} H(x)}} \in N_{\bar \cX}(z^*) $.

By assumption, $\lim_{x \to z^*}-\frac{ \nabla_R \Phi(x)}{\norm{\nabla_R \Phi(x)}} \in \cT_{\bar \cX} (z^*) $.
Since the tangent cone and normal cone are polar cones,
\begin{align*}
0 &\ge \lim_{x\to z^*}\left(  \frac{\nabla_R H(x)}{\norm{\nabla_R H(x)}}\right) ^T \left(\frac{\nabla_R \Phi(x)}{\norm{\nabla_R \Phi(x)}}\right) \\
&= \lim_{x \to z^*} \frac{1}{\norm{\nabla_R \Phi(x)} \norm{\nabla_R H(x)}} \big(-w^T \nabla_R \Phi(x) +\norm{\nabla_R \Phi(x)}^2\big) \\
&\ge\lim_{x \to z^*} \frac{1}{\norm{\nabla_R \Phi(x)} \norm{\nabla _RH(x)}}\left( - \norm{w} \norm{\nabla_R \Phi(x)}+\norm{\nabla_R \Phi(x)}^2\right)\\
&=1,
\end{align*}
where the inequality uses Cauchy-Schwartz , and the  last equality uses that $\lim_{x\to z^*} \norm{\nabla_R \Phi(x)}=\infty$. This gives a contradiction, so we must have that $z^* \in \cX$.
	
	By first-order optimality conditions, $	\nabla_{R} \Phi(h(w)) = P_{\cT} w $, and thus
	\begin{align*}
	h(w) = (\nabla_{R} \Phi)^{-1} \circ P_{\cT} (w).
	\end{align*}
As a shorthand, let $\Psi  =(\nabla_{R} \Phi)^{-1}$. By existence and uniqueness of the maximizer, $\Psi $ is a single-valued function from $\cT \to \cX$.
Thus $g= h \circ F = \Psi \circ P_{\cT} \circ F$.


Next we verify that $\Psi$ is a local diffeomorphism. By the inverse function theorem, $\rmD \Psi  (\nabla_{R} \Phi(x)) = (\nabla^2 _{R} \Phi (x) )^{-1}$. Taking determinants, we see that $\det\left( \rmD \Psi (\nabla_{R} \Phi(x))\right) = \det \left((\nabla^2 _{R} \Phi (x) )^{-1}\right)>0$, using strict convexity of $\Phi$. Thus $\Psi $ is a local diffeomorphism from $\cT\to \cX$.
\end{proof}

\begin{proposition}
	Under Assumptions \ref{assumption:mirror-map}, \ref{assumption:strcvx-lip} and $\alpha< \frac{\mu}{L}$, then
	\begin{enumerate}
	\item $\det(\rmD g(x)) \neq 0$.
	
	\item Every strict saddle point of $x^*$ is an unstable fixed point of mirror descent, meaning $x^* \in \cA^* _g$.
	\end{enumerate}
\label{prop:md}
	\end{proposition}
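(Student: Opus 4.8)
The plan is to build on Lemma~\ref{lem:md}, which writes the mirror descent map as $g = \Psi \circ P_{\cT} \circ F$, where $F(x) = \nabla\Phi(x) - \alpha\nabla f(x)$, $P_{\cT}$ is the (linear) orthogonal projector onto the tangent space $\cT$ of the affine space $\cM$, and $\Psi = (\nabla_{R}\Phi)^{-1}$ is a local diffeomorphism of $\cT$ onto $\cX$ --- in particular $\det(\rmD\Psi(v)) > 0$ for every $v \in \cT$. Applying the chain rule, and using that $\cM$ is affine so that $P_{\cT}\,\rmD F(x)|_{\cT} = \nabla^2_{R}\Phi(x) - \alpha\nabla^2_{R}f(x)$ as an operator on $\cT$ (no second-fundamental-form term appears), one gets
\[
\rmD g(x) = \rmD\Psi(P_{\cT}F(x))\,(\nabla^2_{R}\Phi(x) - \alpha\nabla^2_{R}f(x)).
\]

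For claim~1 it then suffices to show $\nabla^2_{R}\Phi(x) - \alpha\nabla^2_{R}f(x)$ is nonsingular for every $x$, since $\det(\rmD g(x))$ becomes a product of two nonzero determinants. By Assumption~\ref{assumption:strcvx-lip}, $\nabla^2_{R}\Phi(x) \succeq \mu\Id_{\cT}$ and $\nabla^2_{R}f(x) \preceq L\Id_{\cT}$, hence $\nabla^2_{R}\Phi(x) - \alpha\nabla^2_{R}f(x) \succeq (\mu - \alpha L)\Id_{\cT} \succ 0$ whenever $\alpha < \mu/L$, which gives invertibility (indeed $\det(\rmD g(x)) > 0$).

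For claim~2, first note that $g(x^*) = x^*$ is equivalent to $P_{\cT}F(x^*) = \nabla_{R}\Phi(x^*)$, i.e., to $\nabla_{R}f(x^*) = 0$; thus the fixed points of $g$ are precisely the critical points of $f$ on $\cM$, so every strict saddle $x^*$ is a fixed point. At such a point $P_{\cT}F(x^*) = \nabla_{R}\Phi(x^*)$, so the inverse function theorem gives $\rmD\Psi(P_{\cT}F(x^*)) = (\nabla^2_{R}\Phi(x^*))^{-1}$, and the displayed formula becomes
\[
\rmD g(x^*) = \Id_{\cT} - \alpha\,(\nabla^2_{R}\Phi(x^*))^{-1}\nabla^2_{R}f(x^*).
\]
Now $(\nabla^2_{R}\Phi(x^*))^{-1}\nabla^2_{R}f(x^*)$ is similar to the symmetric matrix $(\nabla^2_{R}\Phi(x^*))^{-1/2}\nabla^2_{R}f(x^*)(\nabla^2_{R}\Phi(x^*))^{-1/2}$, so its eigenvalues are real and equal the generalized eigenvalues of the pencil $(\nabla^2_{R}f(x^*),\nabla^2_{R}\Phi(x^*))$. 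Since $x^*$ is a strict saddle there is $v \ne 0$ with $v^{T}\nabla^2_{R}f(x^*)v < 0$, while $v^{T}\nabla^2_{R}\Phi(x^*)v > 0$ by strict convexity, so the smallest generalized eigenvalue $\nu_{\min} = \min_{v \ne 0} \frac{v^{T}\nabla^2_{R}f(x^*)v}{v^{T}\nabla^2_{R}\Phi(x^*)v}$ is negative. Hence $\rmD g(x^*)$ has the eigenvalue $1 - \alpha\nu_{\min} > 1$, so $x^* \in \cA^*_g$.

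The chain-rule bookkeeping and the semidefinite inequality are routine; the two points that need genuine care are the identity $P_{\cT}\rmD F(x)|_{\cT} = \nabla^2_{R}\Phi(x) - \alpha\nabla^2_{R}f(x)$ (which relies on $\cM$ being affine and on the definitions of the Riemannian Hessians in this section) and the passage from ``$\nabla^2_{R}f(x^*)$ indefinite'' to ``$(\nabla^2_{R}\Phi(x^*))^{-1}\nabla^2_{R}f(x^*)$ has a negative eigenvalue'', for which the similarity-to-symmetric / generalized-Rayleigh-quotient argument above is the key device.
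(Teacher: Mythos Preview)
Your proof is correct and follows essentially the same route as the paper: the factorization $g=\Psi\circ P_{\cT}\circ F$ from Lemma~\ref{lem:md}, the chain-rule identity $\rmD(P_{\cT}\circ F)(x)=\nabla^2_{R}\Phi(x)-\alpha\nabla^2_{R}f(x)$, positive definiteness from the strong convexity and Lipschitz bounds, the formula $\rmD g(x^*)=\Id_{\cT}-\alpha(\nabla^2_{R}\Phi(x^*))^{-1}\nabla^2_{R}f(x^*)$ at a critical point, and the similarity-to-symmetric trick to produce a real eigenvalue greater than one. The only cosmetic difference is that you phrase the last step via the generalized Rayleigh quotient $\nu_{\min}$, whereas the paper tests the symmetric matrix $A^{-1/2}HA^{-1/2}$ on the specific vector $A^{1/2}v$ with $v$ a negative eigenvector of $H$; these are the same argument.
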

\begin{proof}
	Again adopt the shorthand $\Psi = (\nabla_{R} \Phi)^{-1} $. 	Using Lemma \ref{lem:md}, $g= \Psi \circ \left(P_{\cT} \circ F\right) $, and $\Psi $ is a local diffeomorphism. To show $\det( \rmD g) \neq 0 $, it suffices to show that $P_{\cT} \circ F$ is a local diffeomorphism, or equivalently verify that $ \rmD ( P_{\cT} \circ F) (x): \cT \to \cT$ is an invertible linear transformation for every $x \in \cX$.
	\begin{align*}
	\rmD ( P_{\cT} \circ F) (x) &= P_{\cT} \nabla^2 \Phi (x) P_{\cT} - \alpha P_{\cT} \nabla^2 f (x) P_{\cT} \\
	&= \nabla_{R} ^2 \Phi(x) - \alpha \nabla_{R} ^2  f(x).
	\end{align*}
	By the Lipschitz assumption, $\nabla^2_{R} f(x) \preceq  L \Id_{\cT}$ and $\alpha < \frac{\mu}{L}$. By the strong convexity assumption of $\Phi$, $\nabla^2 _{R} \Phi(x) \succeq  \mu \Id_{\cT}$. Thus
	\begin{align*}
	\alpha \nabla_{R}^2 f(x) \preceq \mu  \Id_{\cT} \preceq \nabla^2 _{R} \Phi(x).
	\end{align*}
	Using the calculation above,  $\nabla_{R} ^2 \Phi(x) - \alpha \nabla_{R} ^2  f(x) \succ 0$ and is invertible.  This completes our proof of the first part.
	
	Let $x^* \in \cX $ be a strict saddle point. First we verify that it is a fixed point of $g$. Using that $x^*$ is a critical point,
	\begin{align*}
	g(x^*) &=\Psi  ( P_{\cT} \nabla\Phi(x^*) -P_{\cT}\nabla f(x^*)) \\
	& = \Psi ( P_{\cT}\nabla  \Phi(x^*) )\\
	 &= \Psi ( \nabla_{R} \Phi (x^*))\\
	 &=x^*.
	\end{align*}
	Next we verify that $\rmD g(x^*)$ has an eigenvalue of magnitude greater than one. Using the chain rule and then inverse function theorem,
	\begin{align*}
	\rmD g(x^*) &= \rmD \Psi (P_{\cT} \circ F(x^*))  (\nabla^2_{R} \Phi (x^*) - \alpha \nabla^2 _{R} f(x^*)) \tag{chain rule}\\
	&=\nabla^2 _{R} \Phi( x^*) ^{-1} (\nabla^2_{R} \Phi (x^*) - \alpha \nabla^2 _{R} f(x^*) \tag{inverse function theorem and $\Psi\circ P_{\cT} \circ F(x^*) =x^*$ }\\
	&= \Id_{\cT} - \alpha \nabla^2 _{R} \Phi( x^*) ^{-1} \nabla^2 _{R} f(x^*).
	\end{align*}
Define $A= \nabla_{R} ^2 \Phi(x^*)$ and $ H= \nabla^2 _{R} f(x^*)$ . By similarity transformation under $A^{1/2}$,
\begin{align*}
A^{1/2} Dg(x^*)A^{-1/2} = \Id_{\cT} - \alpha A^{-1/2} H A^{-1/2} ,
\end{align*}
which is  a symmetric linear operator. Define $\bar v = A^{1/2} v $, where $v$ is an eigenvector of $\nabla^2 _{R} f(x^*)$ corresponding to a strictly negative eigenvalue $\lambda$, then $\bar v^T A^{-1/2} H A^{-1/2} \bar v <0$, so $\lambda_{\min} ( A^{-1/2} H A^{-1/2}) <0$. Thus $ 1-\alpha \lambda_{\min} ( A^{-1/2} H A^{-1/2})$ is an eigenvalue of $ \Id_{\cT} - \alpha A^{-1/2} H A^{-1/2}$ that is greater than one. Since similarity transformations preserve eigenvalues, $Dg(x^*)$ also has an eigenvalue greater than one, and so $x ^* \in \cA^* _g$.
\end{proof}

By combining Proposition \ref{prop:md} with Corollary \ref{cor:main-corr}, we have the following:
\begin{corollary}
	Let $g$ be the mirror descent algorithm defined in Equation \eqref{eq:g-md}.	Under Assumptions \ref{assumption:mirror-map}, \ref{assumption:strcvx-lip}, and $\alpha< \frac{\mu}{L}$, then the stable set of the strict saddles in $\cX$ is measure zero, meaning $\mu(W_g ) =0$.
	\end{corollary}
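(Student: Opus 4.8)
The plan is to derive this corollary directly from the general machinery of Section~\ref{sec:smt-unstable-fp}, exactly as in the gradient-descent and proximal-point cases: verify the hypotheses of Corollary~\ref{cor:main-corr} for the mirror-descent map $g$ and then invoke it. Concretely, Corollary~\ref{cor:main-corr} requires three things: (i) the domain $\cX$ is a $d$-dimensional manifold without boundary; (ii) $g$ is a $\cC^1$ self-map of $\cX$ with $\det(\rmD g(x)) \neq 0$ for every $x \in \cX$; and (iii) $\cX^* \subset \cA^*_g$. Once these are in place, the conclusion $\mu(W_g) = 0$ is immediate.

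First I would observe that $\cX = \cD \cap \cM$, the intersection of the open convex set $\cD$ with the affine space $\cM$, is an open subset of $\cM$ and hence a smooth manifold without boundary, of dimension $\dim \cT$; the measure-zero notions of Section~\ref{sec:smt-unstable-fp} apply verbatim, working in the single global chart furnished by an affine parametrization of $\cM$. Next, smoothness: by Lemma~\ref{lem:md}, $g = \Psi \circ P_{\cT} \circ F$ with $F(x) = \nabla\Phi(x) - \alpha\nabla f(x)$, which is $\cC^1$ since $\Phi, f \in \cC^2$; $P_{\cT}$ is linear; and $\Psi = (\nabla_{R}\Phi)^{-1}$ is a local diffeomorphism by the last paragraph of Lemma~\ref{lem:md}. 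Hence $g$ is $\cC^1$, and moreover it maps $\cX$ into $\cX$ because Lemma~\ref{lem:md} establishes that the argmax defining $h$ lies in $\cX$ rather than on $\partial\cX$ (using the boundary-divergence and inward-pointing conditions of Assumption~\ref{assumption:mirror-map}).

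The two nontrivial ingredients, (ii) and (iii), are then supplied directly by Proposition~\ref{prop:md}: part~1 gives $\det(\rmD g(x)) \neq 0$ for all $x \in \cX$ (via the observation that $\rmD(P_{\cT}\circ F)(x) = \nabla^2_{R}\Phi(x) - \alpha\nabla^2_{R} f(x) \succ 0$ under $\alpha < \mu/L$, combined with $\Psi$ being a local diffeomorphism), and part~2 gives that every strict saddle point of $f$ is an unstable fixed point of $g$, i.e. $\cX^* \subset \cA^*_g$. With (i)--(iii) verified, Corollary~\ref{cor:main-corr} yields $\mu(W_g) = 0$, as claimed.

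I do not anticipate a genuine obstacle here; the heavy lifting has already been carried out in Lemma~\ref{lem:md} and Proposition~\ref{prop:md}. The only points that deserve care are bookkeeping: one must interpret ``strict saddle in $\cX$'' in the Riemannian sense relative to $\cM$ (so that $\cX^*$ is exactly the set to which part~2 of Proposition~\ref{prop:md} applies), and one must know that the iterates of $g$ stay in the open manifold $\cX$ and never reach $\partial\cX$, so that $g$ is well defined as a self-map --- both of which are exactly what the argmax analysis in Lemma~\ref{lem:md} provides from Assumption~\ref{assumption:mirror-map}.
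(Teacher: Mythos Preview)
Your proposal is correct and follows exactly the paper's approach: the paper simply states that the corollary follows by combining Proposition~\ref{prop:md} with Corollary~\ref{cor:main-corr}, and you have spelled out precisely the hypothesis-checking that this combination entails. The extra bookkeeping you include (that $\cX$ is a boundaryless manifold, that $g$ is $\cC^1$ and a self-map of $\cX$) is implicit in the paper but correctly made explicit in your write-up.
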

\begin{remark}
This corollary does not guarantee that the stable set of saddles on $\partial \cX$ is measure zero. For example in Multiplicative Weights algorithm, there are fixed points on $\partial \cX$ (\eg\ all the vectors $x$ with support size 1).
\end{remark}

\section{Conclusion}
\label{sec:conclusion}

We have shown that first-order methods with random initialization and appropriate constant step-size do not converge to a saddle point. Our results apply to gradient descent, proximal point algorithm, coordinate descent, block coordinate descent, manifold gradient descent and mirror descent. The key common insight in analyzing all these optimization methods is to treat these algorithms as dynamical systems. Every strict saddle point is shown to be locally unstable for these first-order methods and applications of the center-stable manifold theorem suffice to characterize the local behavior. As long as the mapping induced by the optimization method is sufficiently well behaved, \eg\ local diffeomorphism, these local arguments can be extended to the whole domain. Proving the instability of saddle points as well as the smoothness and invertibility of the corresponding maps depends upon careful instantiations of these generic arguments (\eg\, choice of step-size) on a case-by-case basis. The global instability of saddle points for first-order methods is many times informally invoked without careful discussion about the necessary technical conditions needed to formalize these arguments. We hope that this work
 will help ground these arguments on a unified formal foundation.
 We  end this paper with a brief discussion of some open directions:

 {\bf Step-size.} It is not clear if the step size restrictions are necessary to avoid saddle points (\eg\ $\alpha<1/L$ for gradient descent; see \cite{panageas2017gradient} in which examples are provided where $\alpha < 2/L$ is necessary for gradient descent).  Most of the constructions where the gradient method converges to saddle points require fragile initial conditions as discussed in Section~\ref{sec:intuition}.  It remains a possibility that adaptive choice of step-size by Wolfe Line Search or backtracking, may still avoid saddle points provided the initial point is chosen at random.

 {\bf Strict saddles.} It is also important to understand how stringent the strict saddle assumption is. Will a perturbation of a function always satisfy the strict saddle property? \cite{adler2009random} provide very general sufficient conditions for a random function to be Morse, meaning the eigenvalues at critical points are non-zero, which implies the strict saddle condition. These conditions rely on checking that the density of $\nabla^2 f(x)$ has full support conditioned on the event that $\nabla f(x)=0$. This can be explicitly verified for functions $f$ that arise from learning problems. Similar arguments for applications that arise in game theory are developed in \cite{kleinberg2009multiplicative}.

However, we note that there are very difficult unconstrained optimization problems where the strict saddle condition fails.  Perhaps the simplest is optimization of quartic polynomials.  Indeed, checking if zero is a local minimizer of the quartic
\[
	f(x) = \sum_{i,j=1}^n q_{ij} x_i^2 x_j^2
\]
is equivalent to checking whether the matrix $Q=[q_{ij}]$ is co-positive, a co-NP complete problem.  For this $f$, the Hessian at $x=0$ is zero, so $x=0$ is a second-order KKT point, but not necessarily a local minimizer. By the change of variables $z_i = x_i^2$, we see that checking local minimality in a problem with quadratic objective and non-negative inequality constraints is also co-NP complete.

{\bf Speed of convergence.}  Although gradient descent can take exponential amount of time to escape from saddle points at least for some carefully constructed non-convex functions \cite{du2017gradient},  its stochastic counterparts perform much better \cite{jin2017escape}. It would be interesting to characterize these hard instances to the extent possible and to understand whether they are indeed prevalent in applications of interest (\eg\ deep learning). In the other direction, it would be rather useful to show that all first-order methods can be sped up by switching to carefully chosen stochastic variants.

 {\bf Beyond saddle points.}  Even if saddle points are provably avoided, there can be multiple local minima of widely different objective value. The performance of first-order methods would depend crucially on whether they converge for most initial conditions to nearly optimal global minima.  \cite{panageas2016average} analyze such a game theoretic application and show that indeed the size of the region of attraction of the good local optima dominates that of the bad local optima implying nearly optimal average case performance. Such arguments depend crucially both on the setting as well as on the chosen optimization method and it would be interesting to explore their applicability in other settings.

\bibliographystyle{plain}      
\bibliography{gradient}

\end{document}